\renewcommand\labelenumi{(\roman{enumi})}
\renewcommand\theenumi\labelenumi
\newcommand{\assign}{\leftarrow}
\newcommand{\fmutb}{\texttt{fmut}_\beta}
\DeclareMathOperator{\random}{random}
\DeclareMathOperator{\poly}{poly}
\newcommand{\onemax}{\textsc{OneMax}\xspace}
\newcommand{\leadingones}{\textsc{LeadingOnes}\xspace}
\newcommand{\jump}{\textsc{Jump}\xspace}
\newcommand{\N}{\mathbb{N}}
\newcommand{\R}{\mathbb{R}}
\newcommand{\Z}{\mathbb{Z}}
\renewcommand{\epsilon}{\varepsilon}
\newcommand{\eps}{\varepsilon}
\newcommand{\oea}{$(1 + 1)$~EA\xspace}
\newcommand{\foea}{$(1 + 1)$~FEA$_\beta$\xspace}
\newcommand{\lea}{$(1 + \lambda)$~EA\xspace}
\DeclareMathOperator{\opt}{opt}
\newcommand{\expect}[1]{\mathbb{E}\left( #1 \right)}
\newcommand{\card}[1]{\left\vert #1 \right\vert}
\newtheorem{theorem}{Theorem}
\newtheorem{lemma}[theorem]{Lemma}
\newtheorem{corollary}[theorem]{Corollary}
\begin{document}
{\sloppy

\title{Fast Genetic Algorithms}

\author{Benjamin Doerr\thanks{Laboratoire d'Informatique, {\'E}cole Polytechnique, Palaiseau, France,\newline \tt{email: lastname@lix.polytechnique.fr}}
\and Huu Phuoc Le\thanks{{\'E}cole Polytechnique, Palaiseau, France}
\and R\'egis Makhmara\thanks{Laboratoire d'Informatique, {\'E}cole Polytechnique, Palaiseau, France}
\and Ta Duy Nguyen\thanks{{\'E}cole Polytechnique, Palaiseau, France}}

\maketitle

\begin{abstract}

  For genetic algorithms using a bit-string representation of length~$n$, the general recommendation is to take $1/n$ as mutation rate. In this work, we discuss whether this is really justified for multimodal functions. Taking jump functions and the $(1+1)$ evolutionary algorithm as the simplest example, we observe that larger mutation rates give significantly better runtimes. For the $\jump_{m,n}$ function, any mutation rate between $2/n$ and $m / n$ leads to a speed-up at least exponential in $m$ compared to the standard choice. 
  
  The asymptotically best runtime, obtained from using the mutation rate $m/n$ and leading to a speed-up super-exponential in $m$, is very sensitive to small changes of the mutation rate. Any deviation by a small $(1 \pm \eps)$ factor leads to a slow-down exponential in $m$. Consequently, any fixed mutation rate gives strongly sub-optimal results for most jump functions.
   
  Building on this observation, we propose to use a random mutation rate $\alpha/n$, where $\alpha$ is chosen from a power-law distribution. We prove that the $(1+1)$ EA with this heavy-tailed mutation rate optimizes any $\jump_{m,n}$ function in a time that is only a small polynomial (in~$m$) factor above the one stemming from the optimal rate for this $m$. 
  
  Our heavy-tailed mutation operator yields similar speed-ups (over the best known performance guarantees) for the vertex cover problem in bipartite graphs and the matching problem in general graphs. 
  
  Following the example of fast simulated annealing, fast evolution strategies, and fast evolutionary programming, we propose to call genetic algorithms using a heavy-tailed mutation operator \emph{fast genetic algorithms}.
\end{abstract}

%
% The code below should be generated by the tool at
% http://dl.acm.org/ccs.cfm
% Please copy and paste the code instead of the example below. 
%
%\begin{CCSXML}
%<ccs2012>
% <concept>
%  <concept_id>10010520.10010553.10010562</concept_id>
%  <concept_desc>Computer systems organization~Embedded systems</concept_desc>
%  <concept_significance>500</concept_significance>
% </concept>
% <concept>
%  <concept_id>10010520.10010575.10010755</concept_id>
%  <concept_desc>Computer systems organization~Redundancy</concept_desc>
%  <concept_significance>300</concept_significance>
% </concept>
% <concept>
%  <concept_id>10010520.10010553.10010554</concept_id>
%  <concept_desc>Computer systems organization~Robotics</concept_desc>
%  <concept_significance>100</concept_significance>
% </concept>
% <concept>
%  <concept_id>10003033.10003083.10003095</concept_id>
%  <concept_desc>Networks~Network reliability</concept_desc>
%  <concept_significance>100</concept_significance>
% </concept>
%</ccs2012>  
%\end{CCSXML}
%
%\ccsdesc[500]{Computer systems organization~Embedded systems}
%\ccsdesc[300]{Computer systems organization~Redundancy}
%\ccsdesc{Computer systems organization~Robotics}
%\ccsdesc[100]{Networks~Network reliability}
%
%% We no longer use \terms command
%%\terms{Theory}

%\keywords{Evolutionary algorithm, mutation operator, heavy-tailed distribution, power-law distribution, multimodal optimization.}

%\onecolumn

\section{Introduction}

One of the basic variation operators in evolutionary algorithmics is mutation, which is generally understood as a mild modification of a single parent individual. When using a bit-string representation, the most common mutation operator is \emph{standard-bit mutation}, which flips each bit of the parent bit-string $x \in \{0,1\}^n$ independently with some probability $p_n$. The general recommendation is to use a \emph{mutation rate} of $p_n = 1/n$. The expected number of bits parent and offspring differ in then is one. $p_n = 1/n$ is also the mutation rate which maximizes the probability to create as offspring a Hamming neighbor $y$ of the parent $x$, that is, $y$ differs from $x$ in exactly one bit. This mutation rate also gives the asymptotically optimal expected optimization times for several simple evolutionary algorithms on classic simple test problems (see Subsection~\ref{subsec:related} for the details).

In this work, we argue that the $1/n$ recommendation could be the result of an over-fitting to these simple unimodal test problems. As a first indication for this, we determine the optimal mutation rate for optimizing \emph{jump functions}, which were introduced in~\cite{DrosteJW02}. The function $\jump_{m,n}$, $m \ge 2$, differs from the simple unimodal OneMax function (counting the number of ones in the bit-string) in that the fitness on last $m-1$ suboptimal fitness levels is replaced by a very small value. Consequently, an elitist algorithm quickly finds a search point on the thin plateau of local optima, but then needs to flip $m$ bits to jump over the fitness valley to the global optimum. 

Denote by $T_p(m,n)$ the expected \emph{optimization time} (number of search points evaluated until the optimum is found) of the $(1+1)$ evolutionary algorithm (EA) with mutation rate $p$ on the function $\jump_{m,n}$. Extending the result of~\cite{DrosteJW02} to arbitrary mutation rates, we observe that for all $m = o(n)$, the classic choice of the mutation rate gives an expected optimization time of \[T_{1/n}(m,n) = (1+o(1)) e n^{m},\] where as the choice of $p_n = m/n$ leads to an expected optimization time of \[T_{m/n}(m,n) =  \big((1+o(1))\tfrac em\big)^m n^{m},\] an improvement super-exponential in $m$. This is optimal apart from lower order terms, that is, $T_{\opt}(m,n) := \min\{T_p(m,n) \mid p \in [0,1/2]\}$ satisfies $T_{\opt} = (1+o(1)) T_{m/n}$.%\merk{to be able to write $(1+o(1))$, we need similar statements in the lemmas} \merk{true for $m=o(n)$ checked! you can write some comment about this under the lemma as a trivial consequence}

This large runtime improvement by choosing an uncommonly large mutation rate may be surprising, but as our proofs reveal there is a good reason for it. It is true that raising the mutation rate from $1/n$ to $m/n$ decreases the rate of $1$-bit flips from roughly $1/e$ to roughly $m e^{-m}$. However, finding a particular Hamming neighbor is much easier than finding the required distance-$m$ search point. Consequently, the factor $me^{-m}$ slow-down of the roughly $n/2$ one-bit improvements occurring in a typical optimization process is significantly outnumbered by the factor $m^me^{-m}$ speep-up of finding the $m$-bit jump to the global optimum. 
%
%The intuitive reason for this discrepancy is that while only once in the optimization process a large jump to a distance-$m$ search point is needed, finding this jump requires flipping the right $m$ bits (out of a total of $n$ bit positions). This takes many attempts regardless of which mutation rate is used. For the mutation rate $1/n$, this is hindered additionally by the fact that only with probability $\binom{n}{m} (1/n)^m (1-1/n)^{n-m} = \Theta((m/e)^{-m-0.5})$ exactly $m$ bits are flipped (where is the mutation rate $m/n$ with probability $\Theta(m^{-0.5})$ flips exactly $m$ bits). 

These observations suggest that the traditional choice of the mutation rate, leading to a maximal rate of $1$-bit flips, is not ideal. Instead, one should rather optimize the mutation rate with the aim of maximizing the rate of the largest required long-distance jump in the search space.  

Continuing with the example of the jump functions, however, we also observe that small deviations from the optimal rate lead to significant performance losses. When optimizing the function $\jump_{m,n}$ with a mutation rate that differs from $m/n$ by a small constant factor in either direction, the expected optimization becomes larger than $T_{m/n}(m,n) \approx T_{\opt}(m,n)$ by a factor exponential in~$m$. Consequently, there is no good one-size-fits-all mutation rate and finding a good mutation rate for an unknown multimodal problem requires a deep understanding of the fitness landscape.

Based on these insights, we propose to use standard-bit mutation not with a fixed rate, but with a rate chosen randomly according to a \emph{heavy-tailed} distribution. Such a distribution ensures that the number of bits flipped is not strongly concentrated around its mean, which eases having jumps of all sizes in the search space. More precisely, the heavy-tailed mutation operator we propose first chooses a number $\alpha \in [1..n/2]$ according to a power-law distribution $D_n^\beta$ with (negative) exponent $\beta > 1$ and then creates the offspring via standard-bit mutation with rate $\alpha/n$. 

This mutation operator shares many desirable properties with the classic operator. For example, the probability that a single bit (or any other constant number of bits) is flipped is constant. This implies that many classic runtime results hold for our new mutation operator as well. Also, any search point can be created from any parent with positive probability. This probability, however, in the worst case is much higher than when using the classic mutation operator. Consequently, the (tight) general $O(n^n)$ runtime bound for the \oea optimizing any pseudo-Boolean function with unique optimum~\cite{DrosteJW02} improves to $O(n^\beta 2^n)$.

For our main example for a multimodal landscape, the jump functions, we prove that the \oea with our heavy-tailed mutation operator finds the optimum of \emph{any} function $\jump_{m,n}$ with $m > \beta-1$ in expected time \[T_{D_n^\beta}(m,n) \le O(m^{\beta-0.5} \big((1+o(1))\tfrac em\big)^m n^m),\] which is again an improvement super-exponential in $m$ over the classic runtime $T_{1/n}(m, n)$, and only a small polynomial factor of $O(m^{\beta-0.5})$ slower than $T_{\opt}(m,n)$, the expected runtime stemming from the mutation rate which is optimal for this $m$. Note that in return for this small polynomial factor loss over the best instance specific mutation rate we obtained a single mutation operator that achieves a near-optimal (apart from this small polynomial factor) performance on all instances. Note further that the restriction $m > \beta-1$ is automatically fulfilled when using a $\beta < 3$, which is both a good choice from the view-point of heavy-tailed distributions and in the light of the $O(m^{\beta-0.5})$ slow-down factor. 

We observe that a small polynomial factor slow-down cannot be avoided when aiming at a competitive performance on all instances. We prove a lower bound result showing that no randomized choice $D$ of the mutation rate can give a performance of $T_D(m,n) = O(m^{0.5} T_{\opt}(m,n))$ for all $m$. Consequently, by choosing $\beta$ close to $1$ we get the essentially the theoretically best performance on all jump functions. 

Some elementary experiments show that the above runtime improvements are visible already from small problem sizes on. For $m = 8$, the \oea using the heavy-tailed operator with $\beta=1.5$ was faster than classic choice by a factor of at least $2000$ on each instance size $n \in \{20, 30, \dots, 150\}$. 
%For $m = 5$, the \oea using the heavy-tailed operator with $\beta=1.5$ was faster than classic choice by a factor of at least $5$ on each instance size $n \in \{20, 30, \dots, 150\}$. For $m=8$, the advantage in the same set of experiments was always by at least a factor of $2000$.

The very precise results above are made possible by regarding the clean test example of the jump functions. However, we observe a similar behavior for two combinatorial optimization problems regarded in the evolutionary computation literature before. One is the problem of computing a \emph{minimum vertex cover} in complete bipartite graphs. If the partition classes have sizes $m$ and $n-m$, then the \oea with mutation rate $1/n$ has an expected optimization time of at least $\Omega(m n^{m-1})$~\cite{FriedrichHHNW10}. With our heavy-tailed mutation operator, the optimization time drops to $O(n^{\beta} 2^m)$ for all instance with $m \le n/3$. Note that for larger values, our general bound of $O(n^{\beta} 2^n)$ also gives a significant improvement over the classic result, though this is maybe less interesting as a performance of $O(2^n)$ could be also obtained with random search.

As a second combinatorial optimization problem, we regard the problem of computing a large \emph{matching} in an arbitrary undirected graph. For this problem, it was shown in~\cite{GielW03,GielW04} that the \oea with mutation rate $1/n$ finds a matching $M$ with cardinality $|M| \ge OPT / (1+\eps)$ in time $O(n^{2 \lceil 1/\eps \rceil})$. When using our heavy-tailed mutation operator, this bound improves to $O\big(\big((1+o(1))\tfrac em\big)^m m^{\beta-0.5} n^{m+1}\big)$, where we used shorthand $m := 2 \lceil 1/\eps \rceil - 1$ and the constants implicit in the asymptotic notation are independent of $m$.

Overall, these results indicate that multimodal optimization problems might need mutation operators that move faster through the search space than standard-bit mutation with mutation rate $1/n$. A simple way of achieving this goal that in addition works uniformly well over all required jump sizes is the heavy-tailed mutation operators suggested in this work. To the best of our knowledge, this is the first time that a heavy-tailed mutation operator is proposed for discrete evolutionary algorithms. Heavy-tailed mutation operators have been regarded before in simulated annealing~\cite{SzuH87,SzuH87procieee}, evolutionary programming~\cite{YaoLL99}, evolution strategies~\cite{YaoL97} and other subfields of evolutionary computation, however, always in continuous search spaces. Since these algorithms were called \emph{fast} by their inventors, that is, fast simulated annealing, fast evolutionary programming, and fast evolution strategies, for reasons of consistency we shall call genetic algorithms employing such operators \emph{fast genetic algorithms}, well aware of the fact that this first scientific work regarding heavy-tailed mutation in discrete search spaces does by far not give a complete picture on this approach. The results obtained in this work, however, indicate that this is a promising direction deserving more research efforts.

\section{Related Work}\label{subsec:related}

\subsection{Static Mutation Rates}

For reasons of space, we cannot discuss the whole literature on what is the right way to choose the \emph{mutation rate}, that is, the expected fraction of the bit positions in which parent and mutation offspring differ. Restricting ourselves to evolutionary algorithms for discrete optimization problems, the long-standing recommendation, based, e.g., on~\cite{Back93,Muhlenbein92} is that a mutation rate of $1/n$, that is, flipping in average one bit, is a good choice. A mutation rate of roughly this order of magnitude is used in many experimental works. Nevertheless, in particular in evolutionary algorithms using crossover, the interplay between mutation and crossover may ask for a different choice of the mutation rate. For example, in algorithms using first crossover and then applying mutation to the crossover offspring, a smaller mutation rate can be used to implicitly reduce the mutation probability, that is, the probability that an individual is subject to mutation at all. The $(1+(\lambda,\lambda))$ GA~\cite{DoerrDE15} works best with a higher mutation rate, because it uses crossover with the parent as repair mechanism after the mutation phase. 

For simple mutation-based algorithms, which are the best object to study the working principles of mutation in isolation, the following results have been proven. For the \oea, it was shown that $p=1/n$ is asymptotically the unique best mutation rate for the class of all pseudo-Boolean linear functions~\cite{Witt13}. For the \leadingones test function, a slightly higher rate of approximately $1.59/n$ is optimal~\cite{BottcherDN10}. For the \lea optimizing \onemax, a mutation rate of $1/n$ is again optimal, though for larger value of $\lambda$ any $\Theta(1/n)$ mutation rate gives an asymptotically optimal runtime~\cite{GiessenW15}. 

\subsection{Dynamic Mutation Rates}

Since our heavy-tailed mutation operator can be seen as a dynamic choice of the mutation rate (according to a relatively trivial dynamics), let us quickly review the few results close to ours. There is a general belief that a dynamic choice of the mutation rate can be profitable, typically starting with a higher rate and reducing it during the run of the algorithm. Despite this, dynamic choices of the mutation rate are still not that often seen in today's applied research. On the theory side, the first work~\cite{JansenW05} analyzing a dynamic choice of the mutation strength proposes to take in iteration $t$ the mutation rate $2^{(t-1) \!\! \mod (\lceil \log_2 n \rceil-1)} /n$. In other words, the mutation rates $1/n, 2/n, 4/n, ..., 2^{\lceil \log_2 n \rceil-2}/n$ are used in a cyclic manner. The \oea using this dynamic mutation rate has an expected runtime larger by a factor of $\Theta(\log n)$ for several classic test problems. On the other hand, there are problems where this dynamic EA has a polynomial runtime, whereas all static choices of the mutation rate lead to an exponential runtime. %, and another example function displaying the opposite behavior. 
We remark without proof that these results would also hold if the mutation rate was chosen in each iteration uniformly at random from the set of these powers of two. We note without formal proof that the arguments used in the proof of Theorem~\ref{thm:optp} together with Corollary~\ref{cor:optp} show that either version of this dynamic EA would have a runtime of $\exp(\Omega(m)) T_{\opt}(m,n)$ on $\jump_{m,n}$ for most values of $m$ (namely all that are a small constant factor away from the nearest power of two) and all values of $n$.

For the classic test functions, the following is known. In~\cite{BottcherDN10}, it was shown that the optimal fitness-dependent choice of the mutation rate for the \leadingones test function is $p(x)= \frac{1}{\leadingones(x)+1}$ when the parent is $x$. For the $\oea$, this gives an expected optimization time (apart from lower order terms) of $0.68n^2$ compared to $0.77n^2$ for the optimal static mutation rate and $0.86n^2$ for the static choice $1/n$. For the optimization of \onemax using the \oea, a dynamic mutation rate is known to give runtime improvements only of lower order. Surprisingly, for the \lea a dynamic choice of the mutation rate can lead to an asymptotically better runtime~\cite{BadkobehLS14}. The optimization time of $O(\frac{n \lambda \log\log \lambda}{\log \lambda} + n \log n)$ when using the static mutation rate of $1/n$ improves to $O(\frac{n \lambda}{\log \lambda} + n \log n)$ when using the dynamic choice $p(x) = \max\{\frac{\ln(\lambda)}{n \ln(en/(n-\onemax(x)))},\frac 1n\}$. We note without formal proof that for jump functions, a fitness-dependent mutation rate cannot give a significant improvement over the best static mutation as can be seen from our analysis in Section~\ref{sec:static}.

\subsection{Heavy-Tailed Mutation Operators}

The idea to use heavy-tailed mutation operators is not new in evolutionary computation, and more generally, heuristic optimization. However, it was so far restricted to continuous optimization. Szu and Hartley~\cite{SzuH87,SzuH87procieee} suggested to use a (heavy-tailed) Cauchy distribution instead of Gaussian distributions in simulated annealing and report significant speed-ups. This idea was taken up in evolutionary programming~\cite{YaoLL99}, %IEEE TEC
in evolution strategies~\cite{YaoL97}, estimation of distribution algorithms (EDA)~\cite{Posik09}, %Petr Posik: BBOB-benchmarking a simple estimation of distribution algorithm with cauchy distribution. GECCO (Companion) 2009: 2309-2314
and in natural evolution strategies~\cite{SchaulGS11}. However, also some doubts on the general usefulness of heavy-tailed mutations have been raised. Based on mathematical considerations and experiments, it has been suggested that heavy-tailed mutations are useful only if the large variations of these operators take place in a low-dimensional subspace and this space contains the good solutions of the problem~\cite{HansenGAK06}. Otherwise, the curse of dimensionality makes it just too improbable that a long-range mutation finds a better solution. Also,~\cite{Rudolph97} has pointed out that spherical Cauchy distributions lead to the same order of local convergences as Gaussian distributions, whereas non-spherical Cauchy distributions even lead to a slower local convergence. A heavy-tailed mutation EDA was shown to be significantly inferior to BIPOP-CMA-ES via the BBOB algorithm comparison tool~\cite{Posik10}. %Petr Posík: Comparison of cauchy EDA and BIPOP-CMA-ES algorithms on the BBOB noiseless testbed. GECCO (Companion) 2010: 1697-1702

%Algorithms using heavy-tailed distributions are usually called fast, i.e., fast simulated annealing~\cite{SzuH87}, fast evolution strategies, and fast genetic programming~\cite{YaoLL99}. Consequently, for reasons of consistency, we propose to call genetic algorithms, or more broadly evolutionary algorithms, using heavy-tailed offspring distributions \emph{fast genetic algorithms} or \emph{fast evolutionary algorithms} despite the fact that one scientific work is not enough to evaluation to what extent these methods are generally faster than the classic approach of using independent bit-flips.

\section{Preliminaries}

Throughout this paper, we use the following elementary notation. For $a,b \in \R$, we write $[a..b] := \{z \in \Z \mid a \le z \le b\}$ to denote the set of integers in the real interval $[a,b]$. We denote by $\N$ the set of \emph{positive} integers and by $\N_0$ the set of non-negative integers. For $n,m \in \N_0$ with $m \le n$, we write $\binom{n}{\le m} := \sum_{i=0}^m \binom{n}{i}$ for the number of subsets of an $n$-element set that have at most $m$ elements. For two bit-strings $x, y \in \{0,1\}^n$ of length $n$, we denote by $H(x,y) := \{i \in [1..n] \mid x_i \neq y_i\}$ the \emph{Hamming distance} of $x$ and $y$.

\subsection{Jump Functions}

In this work, we investigate the influence of the mutation operator on the performance of genetic algorithms optimizing multimodal functions. We restrict ourselves to \emph{pseudo-Boolean} functions, that is, functions $f : \{0,1\}^n \to \R$ defined on bit-strings of a given length $n$. As much as the \onemax test function $\onemax_n : \{0,1\}^n \to \R; x \mapsto \sum_{i=1}^n x_i$ is the prime example to study the optimization on easy unimodal fitness landscapes, the most popular test problem for multimodal landscapes are jump functions. For $n \in \N$ and $m \in [1..n]$, Droste, Jansen, and Wegener~\cite{DrosteJW02} define the $n$-dimensional jump function $\jump_{m,n} : \{0,1\}^n \to \R$ by
\[
\jump_{m,n}(x) = \begin{cases}
m + \onemax_{n}(x) &\mbox{ if } \onemax_{n}(x) \le n - m \\ 
				&\;\;\;\;\;\mbox{ or } \onemax_n(x) = n\\
n - \onemax_n(x) &\mbox{ otherwise}
\end{cases}
\]
for all $x \in \{0,1\}^n$. In this paper, we are only interested in the case $m \in [2..n/2]$ when $\jump_{m,n}$ does neither degenerate into $\onemax_n$ nor the local optimum encompasses half the search space or more.
\begin{figure}[h]
\center
\includegraphics[width=0.7\textwidth]{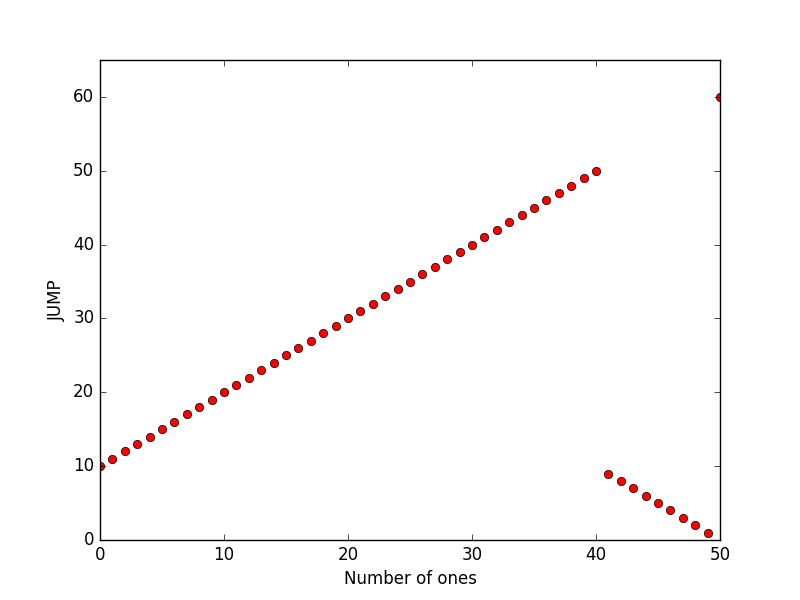}
\caption{The function $\jump_{m,n}$ for $n = 50$ and $m=10$.}
\label{jump_function}
\end{figure}

Jump functions are a useful object to study how well evolutionary algorithms can leave local optima. With the whole radius-$m$ Hamming sphere around the global optimum forming an inferior local optimum of $\jump_{m,n}$, it is very hard for an evolutionary algorithm to not get trapped in this local optimum for a while. Due to the symmetry of the landscape, the only way to leave the local optimum to a better solution is to flip exactly the right $m$ bits. This symmetric and well-understood structure with exactly one fitness valley to be crossed in a typical optimization process makes the jump functions a popular object to study how evolutionary algorithms can cope with local optima. 

Droste et al.~\cite{DrosteJW02} show that the \oea (made precise in the following subsection) with mutation rate $p=1/n$ takes an expected number of $T_{1/n}(m,n) = \Theta(n^m + n \log n)$ fitness evaluations to find the maximum of $\jump_{m,n}$. Here and in the following, all asymptotic notation is to be understood that the implicit constants can be chosen independent of $n$ and $m$. For a broad class of non-elitist algorithms using a mutation rate of $c/n$, an upper bound of $O(n \lambda \log \lambda + (n/c)^m)$ was shown in~\cite{DangL16} for the optimization time on $\jump_{m,n}$. Here $c$ is supposed to be a constant. We are not aware of other runtime analyses for mutation-based algorithms optimizing jump functions.

The jump functions family has also been an example to study in a rigorous manner the effectiveness of crossover. The first work in this direction~\cite{JansenW05} %real royal road...
shows that a simple $(\mu+1)$ genetic algorithm with appropriate parameter settings can obtain a better runtime than mutation-based algorithms. Very roughly speaking, for $m = O(\log n)$ this GA has a runtime of $O(4^m \poly(n))$, reducing the runtime dependence on $m$ from $\Theta(n^m)$ to single-exponential. While this result was the first mathematically supported indication that crossover can be useful in discrete evolutionary optimization, it has, as the authors point out, the limitation that it applies only to a GA that uses crossover very sparingly, namely with probability at most $O(1/mn)$, which is very different from the typical application of crossover. This dependence was mildly improved to $O(m/n)$ along with allowing wider ranges for other parameters in~\cite{KotzingST11}. Interestingly, the research activity on the problem of rigorously proving the usefulness of crossover shifted away from jump functions to real royal road functions~\cite{StorchW04,JansenW05} (which are still similar to jump functions), simplified Ising models~\cite{FischerW05,Sudholt05}, and the all-pairs shortest path problem~\cite{DoerrHK12,DoerrJKNT13}. Only last year, Dang et al.~\cite{Dang1,Dang2} by carefully analyzing the population dynamics could show that a simple GA employing crossover and using natural parameter settings can obtain an expected optimization time of $O(n^{m-1} \log n)$ on $\jump_{m,n}$ for constant $m \ge 3$, which is an $O(n/\log n)$ factor speed-up over comparably simple mutation-based EAs.

\subsection{The \oea}

To study the working principles of different mutation operators, we regard the most simple evolutionary algorithm, the $(1+1)$ evolutionary algorithm (EA). This is a common approach in the theory of evolutionary algorithms, which is based on the experience that results for this simple EA often are valid in a similar manner for more complicated EAs. Without proof, remark that most of our findings in an analogous manner hold for many elitist mutation-based $(\mu+\lambda)$ EAs. 

The \oea, given as Algorithm~\ref{alg:oea}, starts with a random search point $x \in \{0,1\}^n$. In the main optimization loop, it creates a mutation offspring from the parent $x$, which replaces the parent unless is has an inferior fitness. Since our focus is on how long this EA takes to create an optimal solution, we do not specify a termination criterion. 

\begin{algorithm2e}%
	\textbf{Initialization:} Sample $x \in \{0,1\}^n$ uniformly at random\;
 \textbf{Optimization:}
\For{$t=1,2,3,\ldots$}{
Sample $y \in\{0,1\}^n$ by flipping each bit in~$x$ with probability~$p$\,; //mutation step\label{line:mutEA}\\
\lIf{$f(y)\geq f(x)$}{$x \assign y$\,; //selection step}
}
\caption{The (1+1)~EA with static mutation rate $p$ for maximizing~$f\colon\{0,1\}^n\to\R$.}
\label{alg:oea}
\end{algorithm2e}

As usual in theoretically oriented works in evolutionary computation, as performance measure of an evolutionary algorithm we regard the number of fitness evaluations it took to achieve the desired goal. For this reason, we define $T_p(m,n)$ to be the expected number of fitness evaluations the \oea performs when optimizing the $\jump_{m,n}$ function until it first evaluates the optimal solution.  

Since using a mutation rate of more than $1/2$ is not very natural (it means creating an offspring that in average is further away from the parent than the average search point), we shall always assume that our mutation rate $p$ is in $[0,1/2]$. When $p$ depends on the bit-string length $n$, as, e.g., in the recommended choice $p=1/n$, we shall for the ease of reading usually make this functional dependence not explicit (e.g., by writing $p(n)$), but simply continue to write $p$. 
%
%We shall usually not try to give a precise expression for the runtime, but only an asymptotic one. If not explicitly mentioned otherwise, this is to be understood that the hidden constants are independent of $n$, $m$, and $p$. Hence $T_p{m,n} = 

\section{Static Mutation Rates}\label{sec:classic}\label{sec:static}

In this section, we analyze the performance of the \oea on jump functions when employing the standard-bit mutation operator that flips each bit independently with fixed probability $p \in (0,1/2]$. Our main result is that the mutation rate giving the asymptotically best runtime on $\jump_{m,n}$ functions is $p=m/n$, which is far from the standard choice of $1/n$ when $m$ is large. Moreover, we observe that a small constant factor deviation from the $m/n$ mutation rate immediately incurs a runtime increase by a factor exponential in $\Omega(m)$. 

To obtain these results, we first determine (with sufficient precision) the optimization time of the \oea on $\jump_{m,n}$ functions.
%\merk{add version precise apart from lower order terms?}

\begin{theorem}\label{thm:optp}
  For all $n \in \N$, $m \in [2..n/2]$ and $p \in (0,1/2]$, the expected optimization time $T_{p}(m,n)$ of the \oea with mutation rate $p$ on the $n$-dimensional test problem $\jump_{m,n}$ satisfies
\begin{equation*}
 \left(1 - \tbinom{n}{\le m-1} 2^{-n}\right) \frac{1}{p^{m} (1-p)^{n-m}} \le T_{p}(m,n) \le \frac{1}{p^{m} (1-p)^{n-m}} + \frac{2\ln(n/m)}{p(1-p)^{n-1}}.
\end{equation*}
  In particular, if $p \le \frac{m}{n}$, then \[\frac 12 \frac{1}{p^{m} (1-p)^{n-m}} \le T_{p}(m,n) \le 3 \frac{1}{p^{m} (1-p)^{n-m}}.\] Consequently, for any $p \in [\frac 2n, \frac mn]$, $T_p(m,n) \le O( 2^{-m} T_{1/n}(m,n))$.
\end{theorem}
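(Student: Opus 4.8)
The plan is to split a run of the \oea into two phases and to exploit the symmetric structure of $\jump_{m,n}$. On the set $S := \{x : \onemax(x) \le n-m\}$ the function agrees with $m + \onemax$, and every $x$ whose number of ones lies strictly between $n-m$ and $n$ has fitness below $n$. Hence, starting anywhere in $S$, the algorithm behaves like an elitist \onemax optimiser: $\onemax$ is nondecreasing and it climbs to the plateau of local optima $L := \{x : \onemax(x) = n-m\}$ (all of fitness $n$), unless it first flips all remaining zeroes simultaneously and hits the optimum. On $L$ it then performs a random walk, but the crucial point is that this walk is irrelevant: for \emph{every} $x \in L$ the unique mutation producing $(1,\dots,1)$ flips exactly the $m$ zero-bits of $x$ and keeps the $n-m$ one-bits, so it has probability exactly $q := p^{m}(1-p)^{n-m}$, independently of where on $L$ we sit; and any accepted offspring of an $x \in L$ is either the optimum or again lies in $L$ (anything with $\onemax \neq n-m$ other than the optimum has fitness below $n$ and is rejected). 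Therefore, from the moment the walk enters $L$, the number of further iterations until the optimum is sampled is \emph{exactly} geometric with parameter $q$, contributing $1/q = \frac{1}{p^m(1-p)^{n-m}}$ in expectation. This will be the dominant term of both bounds.

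For the lower bound I would condition on the event $A$ that $\onemax(x^{(0)}) \le n-m$. Exactly $\tbinom{n}{\le m-1}$ of the $2^n$ search points violate this, so $\Pr[A] = 1 - \tbinom{n}{\le m-1}2^{-n}$. On $A$, a one-line induction shows that the current point stays in $S$ until the optimum is sampled (an offspring with $n-m < \onemax < n$ has fitness $\le m-1 < m \le f(x)$, hence is rejected). From a point of $S$ with $k$ ones the optimum is sampled with probability $p^{n-k}(1-p)^{k} = \big(\tfrac{p}{1-p}\big)^{\,n-m-k}\,q \le q$, where the inequality uses $k \le n-m$ and $p \le \tfrac12$ (so $\tfrac{p}{1-p} \le 1$). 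Thus the per-iteration probability of hitting the optimum never exceeds $q$; conditioned on $A$ the hitting time therefore stochastically dominates a geometric variable with parameter $q$ and has expectation at least $1/q$, and multiplying by $\Pr[A]$ yields the claimed lower bound.

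For the upper bound I would control the first phase by a standard fitness-level/additive-drift estimate. If $x^{(0)} \in S$, then from a point with $k < n-m$ ones it already suffices to flip exactly one of the $n-k$ zero-bits, which has probability $(n-k)p(1-p)^{n-1}$; since $\onemax$ is nondecreasing, summing these waiting times bounds the time to reach $L$ by $\frac{1}{p(1-p)^{n-1}}\sum_{j=m+1}^{n}\tfrac1j \le \frac{\ln(n/m)}{p(1-p)^{n-1}}$. The complementary case $\onemax(x^{(0)}) > n-m$ occurs only with probability $\tbinom{n}{\le m-1}2^{-n}$ and, even then, repeatedly flipping a single one-bit brings the walk into $S$ within $O\!\big(\tfrac{m}{(n-m)p(1-p)^{n-1}}\big) = O\!\big(\tfrac{1}{p(1-p)^{n-1}}\big)$ expected iterations (using $m \le n/2$); this exponentially small correction, together with the slack in the harmonic-sum estimate, is exactly what the factor $2$ in $\frac{2\ln(n/m)}{p(1-p)^{n-1}}$ absorbs. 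Adding the expected durations of the two phases gives the stated upper bound.

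The remaining assertions are short deductions from the two-sided estimate. For $p \le m/n$ we have $m-1 < n/2$, hence $\tbinom{n}{\le m-1} < 2^{\,n-1}$ and $1 - \tbinom{n}{\le m-1}2^{-n} > \tfrac12$, which gives the factor $\tfrac12$; and $p \le m/n$ forces $\tfrac{1-p}{p} \ge \tfrac{n-m}{m} \ge 1$, so $\big(\tfrac{1-p}{p}\big)^{m-1} \ge \tfrac{n-m}{m} = \tfrac nm - 1 \ge \ln(n/m)$ (using $t - 1 \ge \ln t$ at $t = n/m$), i.e.\ $\frac{2\ln(n/m)}{p(1-p)^{n-1}} \le \frac{2}{p^m(1-p)^{n-m}}$, so the upper bound collapses to $\frac{3}{p^m(1-p)^{n-m}}$. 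Finally, on $[\tfrac2n,\tfrac mn]$ the map $p \mapsto p^m(1-p)^{n-m}$ is strictly log-concave, hence unimodal with peak at $m/n$, hence increasing, so $T_p(m,n) \le \frac{3}{(2/n)^m(1-2/n)^{n-m}} = O(2^{-m}n^m)$ since $(1-2/n)^{n-m} \ge (1-2/n)^n = \Omega(1)$; as $T_{1/n}(m,n) = \Theta(n^m + n\log n) = \Theta(n^m)$ for $m \ge 2$ (Droste et al., or Theorem~\ref{thm:optp} at $p = 1/n$), dividing gives $T_p(m,n) = O(2^{-m}T_{1/n}(m,n))$ with an absolute implicit constant. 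I expect the only real nuisances to be the bookkeeping for initialisations above the plateau and phrasing the $p \le \tfrac12$ monotonicity step cleanly; the heart of the argument is the plateau-symmetry observation, after which everything reduces to routine fitness-level machinery.
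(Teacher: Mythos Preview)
Your proposal is correct and follows essentially the same route as the paper. The only structural differences are cosmetic. For the lower bound, the paper proves by induction that $E_i \ge E_{n-m}$ for every level $i \le n-m$ (using the same inequality $p^{n-k}(1-p)^k \le q$ you derive), whereas you phrase it as a per-step stochastic-domination argument; these are equivalent. For the upper bound, the paper applies a single fitness-level argument over \emph{all} levels, listing them in fitness order $A_{n-1},\dots,A_{n-m+1},A_0,\dots,A_{n-m},A_n$ and bounding $\sum_{i=m+1}^{n}\tfrac1i + \sum_{i=n-m+1}^{n-1}\tfrac1i \le 2\ln(n/m)$ directly; you instead condition on the starting position and handle the valley start separately, which is a slight detour but arrives at the same bound. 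One small wording issue: the factor $\tbinom{n}{\le m-1}2^{-n}$ is \emph{not} exponentially small for $m$ close to $n/2$ (it approaches $\tfrac12$), so your ``exponentially small correction'' phrasing is misleading --- but your actual estimate only needs this factor to be at most $\tfrac12$, which it is, so the argument is sound. The deductions for $p\le m/n$ and the final $O(2^{-m})$ comparison are identical to the paper's.
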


\begin{proof}
We partition the search space into the nonempty sets  $A_i = \{x \in \{0,1\}^n \mid  \lVert x\rVert_1 = i\}$, $0 \leq i \leq n$, which we call levels. We first show the more interesting lower bound. 

Denote by $E_i$ the expected number of iterations it takes to find the optimum when the initial search point is in level $A_{i}$. Denote by $\alpha_{i,j}$ the probability that an iteration starting with a search point in level $A_i$ ends (after mutation and selection) with a search point in level $A_j$. 

First, we prove by induction that for all $0 \leq i \leq n - m $, we have $E_i \geq E_{n - m}$. This inequality trivially holds for $i = n - m$. Suppose that it holds for all $i \in [k + 1..n - m]$. We prove that it also holds for $i = k$. We have 
\[E_k = 1 + \sum_{j = k}^{n - m} \alpha_{k,j}E_j.\]
By our induction hypothesis,
\[(1 - \alpha_{k,k})E_k \geq 1 + \bigg(\sum_{j = k + 1}^{n - m} \alpha_{k,j}\bigg) E_{n - m},\]
and thus
\begin{equation}\label{eq:1}
E_k \geq \frac{1}{1 - \alpha_{k,k}} + \frac{1 - \alpha_{k,k} - \alpha_{k, n}}{1 - \alpha_{k,k}} E_{n - m}.
\end{equation}
Note that $E_{n-m} = \left(p^m(1-p)^{n-m}\right)^{-1}$ and $\alpha_{k, n} = p^{n-k}(1-p)^k$. Hence with $p \leq \frac{1}{2}$ and $k \leq n - m$, we have
\[E_{n-m}\alpha_{k, n} = \left(\frac{p}{1-p}\right)^{n - m - k} \leq 1.\]
Together with (\ref{eq:1}), we obtain $E_k \geq E_{n-m}$. By induction, we conclude that $E_i \geq E_{n - m}$ for all $0 \leq i \leq n - m $. 

Let $x$ denote the random initial search point. Then the above estimate yields
\begin{equation*}
	\begin{split}
	 T_{p}(m,n)  &\geq \sum_{i = 0}^{n - m} \Pr[x \in A_i] E_i \ge \left(1 - \tbinom{n}{\le m-1} 2^{-n}\right)E_{n-m} \\ 
	 & = \left(1 - \tbinom{n}{\le m-1} 2^{-n}\right) \frac{1}{p^{m} (1-p)^{n-m}}.
	\end{split}
\end{equation*}

To prove the upper bound, we use the fitness level method~\cite{Wegener01}. Note that the $A_i$ are fitness levels of $\jump_{m,n}$, however, the  order of increasing fitness is $A_{n-1}$, $A_{n-2}$,..., $A_{n-m+1}$, $A_0$, $A_1$, $A_2$,..., $A_{n-m}$, $A_n$. For $i\in [0..n-1]$, let
\[
s_i \coloneqq
\begin{cases}
\binom {n - i}{1}p(1-p)^{n-1} & \mbox{if } 0 \leq i \leq n - m - 1,\\
p^{m}(1-p)^{n-m} & \mbox{if } i = n - m,\\
\binom {i}{1}p(1-p)^{n-1} & \mbox{if } n - m + 1 \leq i \leq n - 1.
\end{cases}
\]
Then $s_i$ is a lower bound for the probability that an iteration starting with a search point $x\in A_i$ ends with a search point of higher fitness. Thus the fitness level theorem gives the following upper bound for $T_{p}(m,n)$.
\begin{equation*}
\begin{split}
T_{p}(m,n) & \leq \sum_{i = 0}^{n - 1}\frac{1}{s_i} \\
&= \sum_{i = 0}^{n - m-1}\frac{1}{(n - i)p(1-p)^{n-1}} + \frac{1}{p^{m}(1-p)^{n-m}} +\sum_{i = n - m + 1}^{n - 1}\frac{1}{ip(1-p)^{n-1}} \\
% & =\frac{1}{p^{m}(1-p)^{n-m}} +\frac{1}{p(1-p)^{n-1}}\left(\sum_{i = m+1}^{n}\frac{1}{i} + \sum_{i = n - m + 1}^{n - 1}\frac{1}{i}\right)\\
% & \le \frac{1}{p^{m}(1-p)^{n-m}} +\frac{2}{p(1-p)^{n-1}}\sum_{i = m+1}^{n}\frac{1}{i}\\
 & \le \frac{1}{p^{m}(1-p)^{n-m}} +\frac{2\ln(n/m)}{p(1-p)^{n-1}},
\end{split}
\end{equation*}
where we used the estimate \[\sum_{i = m+1}^{n}\tfrac{1}{i} + \sum_{i = n - m + 1}^{n - 1}\tfrac{1}{i} \le 2 \sum_{i = m+1}^{n}\tfrac{1}{i} \le 2 \int_m^n \tfrac 1x dx = 2(\ln n - \ln m) = 2 \ln \tfrac nm.\] 

We now show the second, rougher estimate in our claim. Since $m \le \frac{n}{2}$, by the symmetry of the binomial distribution, we have $\big(1 - \binom{n}{\le m-1} 2^{-n}\big) \ge \frac{1}{2}$, and thus $T_{p}(m,n) \geq \frac{1}{2p^{m} (1-p)^{n-m}}$. If $p \le \frac{m}{n}$, then
\begin{equation*}
\begin{split}
 \frac{2\ln(n/m)}{p(1-p)^{n-1}}&\left(\frac{1}{p^{m}(1-p)^{n-m}}\right)^{-1} = 2\ln\left(\frac{n}{m}\right)\left(\frac{p}{1-p}\right)^{m - 1} \\
 &\le 2\ln\left(\frac{n}{m}\right)\left(\frac{m}{n-m}\right)^{m - 1} \le 2\ln\left(\frac{n}{m}\right)\left(\frac{m}{n-m}\right) \le 2,
\end{split}
\end{equation*}
since $\frac{n}{m}-1 \ge \ln\left(\frac{n}{m}\right)$. Therefore, $T_{p}(m,n) \leq \frac{3}{p^{m} (1-p)^{n-m}}$.

From this, we easily compute the exponential runtime gain claimed in the last sentence of the theorem. Let $p \in [\frac 2n, \frac mn]$. Using that $x \mapsto x^m (1-x)^{n-m}$ is increasing in $[0,m/n]$, we compute \[\frac{T_p(m,n)}{T_{1/n}(m,n)} \le \frac{T_{2/n}(m,n)}{T_{1/n}(m,n)} \le 6 e^2 2^{-m}.\]
\end{proof}

%
%One can prove the lower bound by first proving (by induction for example) that the expected optimization time of the algorithm is higher than the expected time to perform the last $m$-sized jump, (\emph{ie} ${p^{-m}(1 - p)^{m - n}}$). The upper bound can be obtained with a fitness-level argument~\cite{Wegener01}.

From the precise runtime analysis above, we estimate the runtime stemming from the optimal choice for the mutation rate and argue that it can be obtained from using the mutation rate $p = m/n$, but not from too many other mutation rates.
%. We then argue that the same bounds hold for the particular mutation rate $p=m/n$, whereas a small constant factor deviation to a $p$ lying outside the interval $[(1-\eps)m/n, (1+\eps)m/n]$ immediately incurs a runtime increase by a factor exponential in $m\varepsilon^2$. To be more precise, we state in Corollary \ref{cor:optp} that every mutation rate outside the interval above should have the expected runtime greater than $\exp(m\varepsilon^2)T_{\opt}(m,n)$ differ by a constant independent of $m$ and $n$.

\begin{corollary}\label{cor:optp}
The best possible optimization time \[T_{\opt}(m,n) \coloneqq \inf\{T_{p}(m,n) \mid p \in [0,1/2]\}\] for a static mutation rate satisfies \[\frac{1}{2} \frac{n^m}{m^m} \bigg(\frac{n}{n-m}\bigg)^{n-m} \le T_{\opt}(m,n) \le 3\frac{n^m}{m^m} \bigg(\frac{n}{n-m}\bigg)^{n-m} .\]
These bounds also hold for $T_{m/n}(m,n)$, whereas for all $0<\varepsilon < 1$, any mutation rate $p \in [0,1/2] \setminus  [(1-\varepsilon)m/n,(1+\varepsilon)m/n]$ gives a runtime slower than $T_{\opt}(m,n)$ by a factor of at least $\frac 16 \exp(m\eps^2 / 5)$.
\end{corollary}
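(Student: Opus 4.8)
The plan is to reduce the whole statement to the behaviour of the single function $g(p) := p^m(1-p)^{n-m}$. Theorem~\ref{thm:optp} already sandwiches $T_p(m,n)$ between $\tfrac12 g(p)^{-1}$ and $3g(p)^{-1}$ for every $p\in(0,1/2]$: the lower bound since $\tbinom{n}{\le m-1}2^{-n}\le\tfrac12$ when $m\le n/2$, and the upper bound since any $p\le m/n$ (in particular $p=m/n$) satisfies the hypothesis of the second assertion of that theorem. A one-line computation ($\tfrac{d}{dp}\log g(p)=\tfrac mp-\tfrac{n-m}{1-p}$ vanishes exactly at $p=m/n$) shows that $g$ is maximised over $[0,1]$ — and hence over $[0,1/2]$, since $m\le n/2$ — at $p=m/n$, with $g(m/n)=m^m(n-m)^{n-m}/n^n$, i.e.\ $g(m/n)^{-1}=\tfrac{n^m}{m^m}\big(\tfrac{n}{n-m}\big)^{n-m}$. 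Substituting $p=m/n$ into Theorem~\ref{thm:optp} gives the two-sided bound for $T_{m/n}(m,n)$ verbatim; the upper bound for $T_{\opt}(m,n)$ then follows from $T_{\opt}\le T_{m/n}$, and the lower bound from $T_p(m,n)\ge\tfrac12 g(p)^{-1}\ge\tfrac12 g(m/n)^{-1}$, valid for every $p$.

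For the sensitivity part, fix $p\in[0,1/2]$ and write $p=c\,m/n$ (if $p=0$ the claimed slow-down factor is infinite, so assume $c>0$). Combining $T_p(m,n)\ge\tfrac12 g(p)^{-1}$ with $T_{\opt}(m,n)\le 3g(m/n)^{-1}$ yields $T_p(m,n)/T_{\opt}(m,n)\ge\tfrac16\,g(m/n)/g(p)$, so it suffices to prove $\log\!\big(g(m/n)/g(p)\big)\ge m\eps^2/5$ whenever $|c-1|\ge\eps$. Setting $\mu:=m/n$, one has $\log\!\big(g(c\mu)/g(\mu)\big)=m\log c+(n-m)\log\tfrac{1-c\mu}{1-\mu}=:h(c)$ with $h(1)=0$, and using the identity $(n-m)\mu=m(1-\mu)$ a short simplification gives the clean form $h'(c)=\tfrac{m(1-c)}{c(1-c\mu)}$. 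On the relevant integration range $1-t\mu\in[1-p,1-\mu]\subseteq[\tfrac12,1]$ (so everything is well defined), whence $|h'(t)|\ge\tfrac{m|t-1|}{t}$; integrating this from $1$ to $c$ (or from $c$ to $1$) gives, in both cases $c>1$ and $c<1$, the bound $-h(c)\ge m\,\psi(c)$ with $\psi(c):=c-1-\log c\ge 0$.

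It then remains to check $\psi(c)\ge\eps^2/5$ for $|c-1|\ge\eps$, $\eps\in(0,1)$; since $\psi$ is decreasing on $(0,1)$ and increasing on $(1,\infty)$ it suffices to do this at $c=1\mp\eps$. For $c=1-\eps$, $\psi(1-\eps)=-\eps-\log(1-\eps)=\sum_{k\ge 2}\eps^k/k\ge\eps^2/2$, more than enough. For $c=1+\eps$, $\psi(1+\eps)=\eps-\log(1+\eps)$, and the auxiliary function $\phi(\eps):=\eps-\log(1+\eps)-\eps^2/5$ satisfies $\phi(0)=0$ and $\phi'(\eps)=\eps\big(\tfrac{1}{1+\eps}-\tfrac25\big)\ge 0$ for $\eps\le\tfrac32$, so $\phi\ge 0$ on $(0,1)$. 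Putting these together gives $\log\!\big(g(m/n)/g(p)\big)=-h(c)\ge m\eps^2/5$, hence $T_p(m,n)/T_{\opt}(m,n)\ge\tfrac16\exp(m\eps^2/5)$, as claimed.

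The bookkeeping in the first paragraph (invoking Theorem~\ref{thm:optp}, locating the maximum of $g$) is entirely routine, and so are the elementary estimates in the last paragraph. The one place that needs genuine care is the middle step: one must resist bounding $g(m/n)/g(p)$ crudely — for instance, Pinsker's inequality applied to the Bernoulli KL-divergence $\log\!\big(g(m/n)/g(p)\big)=n\,D(\mu\|p)$ only gives $2n(p-\mu)^2=2m^2\eps^2/n$, which is too weak by a factor $\Theta(n/m)$ when $p$ is near the optimum — and instead exploit the exact antiderivative $-h(c)\ge m(c-1-\log c)$, which is what produces the correct $\Theta(m)$ scaling uniformly in $n$.
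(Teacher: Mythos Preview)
Your proof is correct and follows the same overall structure as the paper's: invoke Theorem~\ref{thm:optp} to sandwich $T_p$ by $g(p)^{-1}$ up to constants, locate the maximiser of $g$ at $p=m/n$, and then bound $g(m/n)/g(p)$ from below for $p$ outside $[(1-\eps)m/n,(1+\eps)m/n]$. The only genuine difference is in how that last step is executed. The paper treats the two sides separately: it reduces $p>(1+\eps)m/n$ and $p<(1-\eps)m/n$ to the respective boundary values via the monotonicity of $g$ on either side of $m/n$, and then applies Taylor-type inequalities ($e^x\ge 1+x+x^2/2$, $1-x\le e^{-x}$, and $e^{-x}\ge 1-x+x^2/2-x^3/6$) directly to the product $(1\pm\eps)^m\big(1\mp\tfrac{m\eps}{n-m}\big)^{n-m}$. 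You instead compute the log-derivative $h'(c)=m(1-c)/\bigl(c(1-c\mu)\bigr)$, discard the factor $1-c\mu\le 1$, and integrate to obtain the single inequality $\log\bigl(g(m/n)/g(p)\bigr)\ge m(c-1-\log c)$, which handles both regimes at once; the endpoint check $c-1-\log c\ge\eps^2/5$ at $c=1\pm\eps$ is then a one-variable exercise. Your route is slightly tidier and makes the $\Theta(m)$ scaling visible from the outset (your closing remark contrasting it with the $\Theta(m^2/n)$ bound from Pinsker is a useful diagnostic), but the two arguments are doing the same work and land on the same constant $1/5$. One minor wording quibble: your opening sentence asserts the two-sided sandwich ``for every $p\in(0,1/2]$'' before restricting the upper bound to $p\le m/n$; since you only ever use the upper bound at $p=m/n$ this causes no harm.
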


\begin{proof}
For the upper bound, using Theorem~\ref{thm:optp} we simply estimate
\[T_{\opt}(m,n) \le T_{m/n}(m,n) \le 3\frac{n^m}{m^m} \bigg(\frac{n}{n-m}\bigg)^{n-m}. \]
For the lower bound, elementary calculus shows that $\frac{m}{n}$ is the unique maximum point of $x\mapsto x^m(1-x)^{n-m}$ in the interval $[0,1]$. Therefore, by Theorem~\ref{thm:optp},
\[\frac{1}{2} \frac{n^m}{m^m} \bigg(\frac{n}{n-m}\bigg)^{n-m} \le T_{\opt}(m,n).\]
For $\frac 12 \ge p > (1+\varepsilon)\frac{m}{n}$, we have
\[T_{p}(m,n) \ge \frac{1}{2p^m(1-p)^{n-m}}\ge \frac{1}{2\left(\frac{(1+\varepsilon)m}{n}\right)^m \left(1-\frac{(1+\varepsilon)m}{n}\right)^{n-m}}.\]
Hence, since $e^x \ge 1 +x + \frac{x^2}{2}$ and $1-x\le e^{-x}$ for all $x \in \R_{\ge 0}$, we compute
\begin{equation*}%\label{eq:2}
\begin{split}
  \frac{1}{6}\frac{T_{\opt}(m,n)}{T_{p}(m,n)} 
  &\le \frac{\left(\frac{(1+\varepsilon)m}{n}\right)^m \left(1-\frac{(1+\varepsilon)m}{n}\right)^{n-m}}{\left(\frac{m}{n}\right)^m\left(1-\frac{m}{n}\right)^{n-m}} 
  = (1+\varepsilon)^m \left(1- \frac{m\varepsilon}{n-m}\right)^{n-m}  \\
 &\le \left(\frac{1+\varepsilon}{e^\varepsilon}\right)^m  \le \left(\frac{1+\varepsilon}{1+\varepsilon+\varepsilon^2/2}\right)^m  \\
 & = \left(1- \frac{\varepsilon^2/2}{1+\varepsilon+\varepsilon^2/2}\right)^m \le \left(1- \frac{\varepsilon^2}{5}\right)^m \le \exp\left(-\frac{m\varepsilon^2}{5}\right).
 \end{split}
\end{equation*}
Therefore, $T_p(m,n) \ge \frac{1}{6}\exp(\frac{m\varepsilon^2}{5}) T_{\opt}(m,n)$.\\
Similarly, for $q < (1-\varepsilon)\frac{m}{n}$, by using $e^{-x} \ge 1 - x +\frac{x^2}{2}-\frac{x^3}{6}$ valid for all $x\in \R$, we obtain
\begin{equation*}
\begin{split}
 \frac{1}{6}\frac{T_{\opt}(m,n)}{T_{q}(m,n)} &\le \frac{\left(\frac{(1-\varepsilon)m}{n}\right)^m \left(1-\frac{(1-\varepsilon)m}{n}\right)^{n-m}}{\left(\frac{m}{n}\right)^m\left(1-\frac{m}{n}\right)^{n-m}} \\ 
& = (1-\varepsilon)^m \left(1+ \frac{m\varepsilon}{n-m}\right)^{n-m} \le  \left(\frac{1-\varepsilon}{e^{-\varepsilon}}\right)^m  \\
& \le  \left(\frac{1-\varepsilon}{1-\varepsilon+\varepsilon^2/2-\varepsilon^3/6}\right)^m \le \left(1- \frac{\varepsilon^2/2-\varepsilon^3/6}{1-\varepsilon+\varepsilon^2/2-\varepsilon^3/6}\right)^m \\
 &\le \left(1-\frac{\varepsilon^2}{3}\right)^m \le \exp \left(-\frac{m\varepsilon^2}{3}\right).
\end{split}
\end{equation*}
Therefore, $T_q(m,n) \ge \frac{1}{6}\exp\left(\frac{m\varepsilon^2}{3}\right) T_{\opt}(m,n)$.
\end{proof}

%The proof lies on a relevant estimation of the different terms. Additionally, choosing $p$ larger than $2m/n$ (which means $\eps \ge 1$) incurs a runtime increase by a factor exponential in $m$.

\section{Design and Analysis of Heavy-tailed Mutation Operators}

In the previous section, we observed that an asymptotically optimal mutation rate for the $\jump_{m,n}$ function is $m/n$ rather than the general suggestion of $1/n$. However, due to the strong concentration of the number of bits that are flipped, we also observed that a small constant factor deviation from this optimal mutation rate incurs a significant increase in the runtime (exponential in $m$). From the view-point of algorithms design, this suggests that to get a good performance when optimizing multimodal functions, the algorithm designer needs to know beforehand which multi-bit flips will be needed to escape local optima. This is, clearly, an unrealistic assumption for any real-world optimization problem. To overcome this difficulty, we now design a mutation operator such that the number of bits flipped is not strongly concentrated, but instead follows a heavy-tailed distribution, more precisely, a power-law distribution. 

We prove that the \oea with this operator, which we shall call \foea, optimizes all $\jump_{m,n}$ functions with a runtime  larger than the optimal runtime $T_{\opt}(m,n)$ only by a small factor polynomially bounded in $m$, which is much better than the exponential (in $m$) performance loss incurred from missing the optimal static mutation rate by a few percent. We also show that such a small polynomial loss is unavoidable when aiming for an algorithm that shows a good performance on all jump functions. %\merk{can we say something on the case that we know the $m$ apart from constant factors?}
Finally, we show that similar performance gains from using a heavy-tailed mutation operator can also be observed with two combinatorial optimization problems, namely the problem of computing small vertex covers in complete bipartite graphs and the problem of computing large matchings in arbitrary graphs. 

\subsection{The Heavy-tailed Mutation Operator $\fmutb$}

The main reason why only a very carefully chosen mutation rate gave near-optimal results in Corollary~\ref{cor:optp} is the strong concentration behavior of the binomial distribution. If we flip bits independently with probability $m/n$, then with high probability the actual number of bits flipped is strongly concentrated around $m$. The probability that we flip  $(1-\varepsilon)m/n$ bits and less, or $(1+\varepsilon)m/n$ bits and more, at most $2 \exp(-\varepsilon^2 m / 3)$, that is, is exponentially small in $m$ (this follows directly from classic Chernoff bounds, e.g.,~\cite[Corollary~1.10 (a) and (c)]{Doerr11}). %my book chapter
 Hence to obtain a good performance on a wider set of jump functions (that is, with parameter $m$ varying at least by small constant factors), we cannot employ standard-bit mutation with a static mutation rate. 

To overcome the negative effect of strong concentration and at the same time be structurally close to the established way to performing mutation, we propose to use standard-bit mutation with a mutation rate that is chosen randomly in each iteration according to a power-law distribution with (negative) exponent $\beta$ greater than $1$. This keeps the property of standard-bit mutation with probability $1/n$ that with constant probability a single bit is flipped. This property is important to have a good performance in easy unimodal parts of the search space, and in particular, to easily approach the global optimum once one has entered its basin of attraction. At the same time, the heavy-tailed choice of the mutation rate ensures that with probability $\Theta(k^{-\beta})$, exactly $k$ bits are flipped. Hence this event, necessary to leave a local optimum with $(k-1)$ Hamming ball around it being part of its basin of attraction, is much more likely than when using the classic mutation operator, which flips $k$ bits only with probability $k^{-\Theta(k)}$. 

To keep the operator and its analysis simple, we only use mutation rates of type $\alpha / n$ with $\alpha \in [1..n/2]$. We show at the end of this section that no random choice of the mutation rate (including continuous ones) can give a performance on jump functions significantly better than the one stemming from our mutation operator, which justifies this restriction to integer values for $\alpha$. To ease reading, we shall always write $n/2$ even in cases where an integer is required, e.g., as boundary of the range of a sum. Of course, in all such cases $n/2$ is to be understood as $\lfloor n/2 \rfloor$. 

\textbf{The discrete power-law distribution $D_{n/2}^\beta$:} Let $\beta > 1$ be a constant. Then the discrete power-law distribution $D^\beta_{n/2}$ on $[1..n/2]$ is defined as follows. If a random variable $X$ follows the distribution $D^\beta_{n/2}$, then \[\Pr[X = \alpha] = (C_{n/2}^\beta)^{-1} \alpha^{-\beta}\] for all $\alpha \in [1..n/2]$, where the normalization constant is $C_{n/2}^\beta := \sum_{i=1}^{n/2} i^{-\beta}$. Note that $C_{n/2}^\beta$ is asymptotically equal to $\zeta(\beta)$, the Riemann zeta function $\zeta$ evaluated at $\beta$. We have \[\zeta(\beta) - \frac{\beta}{\beta-1}\left(\frac{n}{2}\right)^{-\beta+1} \le C_{n/2}^\beta \le \zeta(\beta)\]
for all $\beta>1$. As orientation, e.g., $\zeta(1.5) \approx 2.612$, $\zeta(2) \approx 1.645$, and $\zeta(3) = 1.202$ are some known values of the $\zeta$ function. 

\textbf{The heavy-tailed mutation operator $\fmutb$:} We define the mutation operator $\fmutb$ (with the $\texttt{f}$ again referring to the word  \emph{fast} usually employed when heavy-tailed distributions are used) as follows. When the parent individual is some bit-string $x \in \{0,1\}^n$, the mutation operator $\fmutb$ first chooses a random mutation rate $\alpha/n$ with $\alpha \in [1..n/2]$ chosen according to the power-law distribution $D_{n/2}^\beta$ and then creates an offspring by flipping each bit of the parent independently with probability $\alpha/n$ (that is, via standard-bit mutation with rate $\alpha/n$). The pseudocode for this operator is given in Algorithm~\ref{alg:heavy}.

\begin{algorithm2e}[h]%
Input: $x \in \{0,1\}^n$\\
Output: $y \in \{0,1\}^n$ obtained from applying standard-bit mutation to $x$ with mutation rate $\alpha/n$, where $\alpha$ is chosen randomly according to $D_{n/2}^\beta$\\
$y \assign x$\;
Choose $\alpha \in [1..n/2]$ randomly according to $D_{n/2}^\beta$\; 
\For{$j = 1$ \KwTo $n$}{
  \If{$\random([0,1]) \cdot n \le \alpha$}{
    $y_j \assign 1-y_j$;
    }
	}
\Return $y$  
\caption{The heavy-tailed mutation operator $\fmutb$.}
\label{alg:heavy}
\end{algorithm2e}

We collect some important properties of the heavy-tailed mutation operator. Again, we have to skip some of the proofs.

\begin{lemma}\label{lem:elementary}
  Let $n \in \N$ and $\beta > 1$. Let $x \in \{0,1\}^n$ and $y = \fmutb(x)$. 
  \begin{enumerate}
	  \item \label{it:elementary:1} The probability that $x$ and $y$ differ in exactly one bit, is $P^\beta_1 := \Pr[H(x,y)=1] = (C_{n/2}^\beta)^{-1}\Theta(1)$ with the constants implicit in the $\Theta(\cdot)$ independent of $n$ and $\beta$.
	  \item For any $k \in [2..n/2]$ with $k > \beta-1$, the probability that $x$ and $y$ differ in exactly $k$ bits, is $P^\beta_k := \Pr[H(x,y)=k] \ge (C_{n/2}^\beta)^{-1}\Omega(k^{-\beta})$ with the constants implicit in the $\Theta(\cdot)$ independent of $n$, $k$, and $\beta$. 
	  	  \label{it:elementary:2}
	  \item\label{it:worstprob} Let $z \in \{0,1\}^n$. If $\beta-1 < H(x,z) \le n/2$ or $H(x,z)=1$, then $\Pr[\fmutb(x)=z] = (C_{n/2}^\beta)^{-1} \Omega(H(x,z)^{-\beta}) \binom{n}{H(x,z)}^{-1}$. Without any assumption on $H(x,z)$, we have $\Pr[\fmutb(x)=z]  \ge (C_{n/2}^\beta)^{-1} \Omega( 2^{-n}n^{-\beta})$. In both cases, the implicit constants can be chosen independent of $z$, $\beta$, and $n$.
	  \item \label{it:expect_hamming} The expected number of bits that $x$ and $y$ differ in is \[
\expect{H(x,y)} =
\begin{cases}
\Theta(1) & \mbox{if } \beta > 2,\\
\Theta(\ln(n)) & \mbox{if } \beta = 2,\\
\Theta(n^{2-\beta}) & \mbox{if } 1 < \beta < 2,
\end{cases}
\]
where the implicit constants may depend on $\beta$.
  \end{enumerate}
\end{lemma}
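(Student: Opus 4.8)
The plan is to reduce all four items to elementary estimates on the binomial distribution $\mathrm{Bin}(n,\alpha/n)$, averaged over $\alpha$ drawn from $D_{n/2}^\beta$. Writing $\Pr[X=\alpha]=(C_{n/2}^\beta)^{-1}\alpha^{-\beta}$ and noting that, conditioned on the sampled rate $\alpha/n$, the Hamming distance $H(x,y)$ has distribution $\mathrm{Bin}(n,\alpha/n)$, one gets for every $d\in[0..n]$ that
\[
\Pr[H(x,y)=d]=(C_{n/2}^\beta)^{-1}\sum_{\alpha=1}^{n/2}\alpha^{-\beta}\binom nd\Big(\tfrac\alpha n\Big)^d\Big(1-\tfrac\alpha n\Big)^{n-d},
\]
and, since all $\binom nd$ search points at Hamming distance $d$ from $x$ are equally likely to be produced, $\Pr[\fmutb(x)=z]=\binom n{H(x,z)}^{-1}\Pr[H(x,y)=H(x,z)]$. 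This identity shows that the first statement of item~\ref{it:worstprob} is item~\ref{it:elementary:2} (respectively item~\ref{it:elementary:1} when $d=1$) divided by $\binom nd$, so it suffices to establish items~\ref{it:elementary:1}, \ref{it:elementary:2}, the unconditional bound of item~\ref{it:worstprob}, and item~\ref{it:expect_hamming}. Item~\ref{it:elementary:1} follows immediately from the displayed formula with $d=1$: the term $\alpha=1$ alone contributes $(C_{n/2}^\beta)^{-1}(1-1/n)^{n-1}\ge(C_{n/2}^\beta)^{-1}/e$, while the whole sum is at most $(C_{n/2}^\beta)^{-1}\sum_{\alpha\ge1}\alpha^{1-\beta}e^{-\alpha/2}\le(C_{n/2}^\beta)^{-1}\sum_{\alpha\ge1}e^{-\alpha/2}=(C_{n/2}^\beta)^{-1}O(1)$, using $\alpha^{1-\beta}\le1$ (as $\beta>1$) and $(1-\alpha/n)^{n-1}\le e^{-\alpha/2}$; both constants are independent of $n$ and $\beta$.

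The core of the lemma is item~\ref{it:elementary:2}. I would fix $k\in[2..n/2]$ with $k>\beta-1$, put $r:=\lceil\sqrt k\rceil$ and $W:=[\max\{1,k-r\}..k]$, and restrict the sum in the displayed identity to $\alpha\in W$. Every such $\alpha$ satisfies $\alpha\le k$, hence $\alpha^{-\beta}\ge k^{-\beta}$ — this is precisely where one avoids losing any factor depending on $\beta$ — so
\[
\Pr[H(x,y)=k]\ \ge\ (C_{n/2}^\beta)^{-1}k^{-\beta}\sum_{\alpha\in W}\Pr\big[\mathrm{Bin}(n,\tfrac\alpha n)=k\big].
\]
It then remains to show that $\sum_{\alpha\in W}\Pr[\mathrm{Bin}(n,\alpha/n)=k]$ is bounded below by a positive constant independent of $n$, $k$, $\beta$; since $|W|\ge\sqrt k$, it is enough that each summand be $\Omega(1/\sqrt k)$. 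For this I would apply Stirling's formula with its explicit error bounds to write $\binom nk(\alpha/n)^k(1-\alpha/n)^{n-k}=\Theta(1/\sqrt k)\cdot(\alpha/k)^k\big((n-\alpha)/(n-k)\big)^{n-k}$ (using $k\le n/2$, so $n/(n-k)\in[1,2]$), and then, writing $\alpha=k-\delta$ with $0\le\delta\le r$ and taking logarithms, $k\ln(1-\delta/k)+(n-k)\ln(1+\delta/(n-k))\ge -2\delta^2/k\ge -c_0$ for a universal constant $c_0$, the point being that the two $\delta$-linear contributions cancel and $\delta^2/k=O(1)$ since $\delta\le\sqrt k+1$. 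Hence $(\alpha/k)^k((n-\alpha)/(n-k))^{n-k}=\Omega(1)$ uniformly, giving the required $\Omega(1/\sqrt k)$ per term. The finitely many small $k$ for which the logarithmic estimate is not directly valid are handled separately, using that for fixed $k$ and $\alpha\le k$ the quantity $\Pr[\mathrm{Bin}(n,\alpha/n)=k]$ is positive for every $n\ge 2k$ and tends to the positive Poisson value $\alpha^k e^{-\alpha}/k!$, hence has a positive infimum over $n$.

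For the unconditional bound in item~\ref{it:worstprob}, I would retain only the single rate $\alpha=\lfloor n/2\rfloor$ in the identity: then $\Pr[X=\alpha]\ge(C_{n/2}^\beta)^{-1}(n/2)^{-\beta}$, while for any $d=H(x,z)$ we have $(\alpha/n)^d(1-\alpha/n)^{n-d}=\Omega(2^{-n})$ — this equals exactly $2^{-n}$ when $n$ is even, and a short computation using $(1-1/n)^n\ge 1/4$ gives $\Omega(2^{-n})$ when $n$ is odd — so $\Pr[\fmutb(x)=z]\ge(C_{n/2}^\beta)^{-1}\Omega(n^{-\beta}2^{-n})$. Finally, for item~\ref{it:expect_hamming}, conditioning on $\alpha$ and using $\expect{\mathrm{Bin}(n,\alpha/n)}=\alpha$ gives $\expect{H(x,y)}=\expect{X}=(C_{n/2}^\beta)^{-1}\sum_{\alpha=1}^{n/2}\alpha^{1-\beta}$; since $C_{n/2}^\beta\in[1,\zeta(\beta)]$ is a $\beta$-dependent constant, the three regimes follow from comparing $\sum_{\alpha=1}^{n/2}\alpha^{1-\beta}$ with $\int_1^{n/2}x^{1-\beta}\,dx$, which is $\Theta(1)$ for $\beta>2$ (the sum converging to $\zeta(\beta-1)$), $\Theta(\ln n)$ for $\beta=2$, and $\Theta(n^{2-\beta})$ for $1<\beta<2$ since $2-\beta\in(0,1)$.

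I expect the main obstacle to be item~\ref{it:elementary:2}: obtaining the exact exponent $k^{-\beta}$ rather than the weaker $k^{-\beta-1/2}$ that a single rate would yield forces one to sum over a window of $\Theta(\sqrt k)$ rates, and the bookkeeping needed to show that every binomial probability in that window is $\Omega(1/\sqrt k)$ with constants uniform in $n$, $k$ and $\beta$ — in particular the cancellation of the $\delta$-linear terms in the logarithm, plus the separate treatment of small $k$ — is the one genuinely delicate computation; the remaining items are routine averaging and integral comparison.
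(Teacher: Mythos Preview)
Your proof is correct. Items~\ref{it:elementary:1}, \ref{it:worstprob}, and~\ref{it:expect_hamming} follow the same lines as the paper (your upper bound in~\ref{it:elementary:1}, via $\alpha^{1-\beta}\le 1$ and $(1-\alpha/n)^{n-1}\le e^{-\alpha/2}$, is in fact slightly slicker than the paper's detour through $e^x\ge x^2$).

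The genuine difference is item~\ref{it:elementary:2}. The paper rewrites $\sum_{\alpha}\alpha^{-\beta}\binom nk(\alpha/n)^k(1-\alpha/n)^{n-k}$ as a Riemann sum, approximates it by the Beta integral $\int_0^1 x^{k-\beta}(1-x)^{n-k}\,dx=B(k-\beta+1,n-k+1)$, converts to Gamma functions, and then estimates $\prod_{i=k}^{n}(i-\eps)$ logarithmically to extract the $k^{-\beta}$ factor. Your route is more elementary and arguably more transparent: you keep only the window $\alpha\in[k-\lceil\sqrt k\rceil..k]$, use $\alpha\le k$ to pull out $k^{-\beta}$ directly (this is exactly what makes the exponent $-\beta$ rather than $-\beta-\tfrac12$), and then show each of the $\Theta(\sqrt k)$ binomial probabilities is $\Omega(1/\sqrt k)$ via Stirling and the cancellation of the linear-in-$\delta$ terms in $k\ln(1-\delta/k)+(n-k)\ln(1+\delta/(n-k))$. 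The paper's integral route would in principle also yield a matching upper bound (not claimed in the lemma), while your window argument gives only the lower bound but avoids all special-function machinery; both are valid and the bookkeeping you flag (the quadratic remainder $\le 2\delta^2/k=O(1)$ and the separate Poisson-limit treatment of bounded $k$) goes through as stated.
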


\begin{proof}
\begin{enumerate}
	\item We have
	\begin{equation*}
		\begin{split}
		\Pr[H(x,y)=1]&= (C_{n/2}^\beta)^{-1}\sum_{i=1}^{n/2}\frac{1}{i^{\beta}}\binom{n}{1}\frac{i}{n}\left(1-\frac{i}{n}\right)^{n-1}\\
		&=(C_{n/2}^\beta)^{-1}\sum_{i=1}^{n/2}i^{1-\beta}\left(1-\frac{i}{n}\right)^{n-1}.
		\end{split}
	\end{equation*}
	By using $(1-\frac{1}{x})^x \leq e^{-1}$ valid for $x\neq 1$ and $e^x \ge x^2$ valid for $x \ge 0$, we obtain
	\begin{equation*}
	\begin{split}
	& \sum_{i=1}^{n/2}i^{1-\beta}\left(1-\frac{i}{n}\right)^{n-1} \leq \sum_{i=1}^{n/2}\frac{e^{-\frac{i(n-1)}{n}}}{i^{\beta-1}}\\
	& \leq \frac{n^2}{(n-1)^2}\sum_{i=1}^{n/2}\frac{1}{i^{\beta+1}} \leq 4\sum_{i=1}^{n/2}\frac{1}{i^2} < \frac{2\pi^2}{3}.
	\end{split}
	\end{equation*}
	Moreover, since $\left(1-\frac{1}{n}\right)^{n-1} \ge e^{-1}$ for every $n \ge 2$, we have $\sum_{i=1}^{n/2}\frac{1}{i^{\beta}}\binom{n}{1}\frac{i}{n}\left(1-\frac{i}{n}\right)^{n-1} \ge \left(1-\frac{1}{n}\right)^{n-1} \ge e^{-1}$. Hence, $\Pr[H(x,y)=1] = (C_{n/2}^\beta)^{-1}\Theta(1)$.

	\item We have
	\begin{equation*}
	\begin{split}
	& \Pr[H(x,y)=k] = (C_{n/2}^\beta)^{-1} \sum_{i=1}^{n/2} i^{-\beta}\binom{n}{k}\left(\frac{i}{n}\right)^{k}\left(1-\frac{i}{n}\right)^{n-k} \\
	& =\frac{1}{C_{n/2}^\beta n^{\beta-1}}\binom{n}{k}\left(\frac{1}{n}\sum_{i=1}^{n/2} \left(\frac{i}{n}\right)^{k-\beta}\left(1-\frac{i}{n}\right)^{n-k}\right).
	\end{split}
	\end{equation*}		
Since $\left(\frac{i}{n}\right)^{k-\beta}\left(1-\frac{i}{n}\right)^{n-k} \ge \left(1-\frac{i}{n}\right)^{k-\beta}\left(\frac{i}{n}\right)^{n-k}$ for any $i \le n/2$ and $k \le n/2$,
\begin{equation*}
\begin{split}
& \frac{1}{2n}\sum_{i=1}^{n} \left(\frac{i}{n}\right)^{k-\beta}\left(1-\frac{i}{n}\right)^{n-k}\le \frac{1}{n}\sum_{i=1}^{n/2} \left(\frac{i}{n}\right)^{k-\beta}\left(1-\frac{i}{n}\right)^{n-k}\\
& \le \frac{1}{n}\sum_{i=1}^{n} \left(\frac{i}{n}\right)^{k-\beta}\left(1-\frac{i}{n}\right)^{n-k}.
\end{split}
\end{equation*}

Exploiting the fact that $x \mapsto x^{k-\beta}(1-x)^{n-k}$ is unimodal in $[0,1]$, we approximate the previous expression by an integral as follows.
\begin{align*}
\frac{1}{n}\sum_{i=1}^{n} & \left(\frac{i}{n}\right)^{k - \beta} \left(1-\frac{i}{n}\right)^{n-k} \\
&= \int_{0}^{1}x^{k-\beta}(1-x)^{n-k}dx \pm \tfrac 1n \max\{x^{k-\beta}(1-x)^{n-k} \mid x \in [0,1]\} \\
&= \Theta\left(\int_{0}^{1}x^{k-\beta}(1-x)^{n-k}dx\right).
\end{align*}
%where the last estimate follows from \merk{add an argument}.

%
%To be more precise, we can prove:
%\[\frac{1}{3}\left(\int_{0}^{1}x^{m-\beta}(1-x)^{n-m}dx\right) \le \frac{1}{n}\sum_{i=1}^{n}\left(\frac{i}{n}\right)^{m - \beta} \left(1-\frac{i}{n}\right)^{n-m} \le 3\left(\int_{0}^{1}x^{m-\beta}(1-x)^{n-m}dx\right)\]

%\merk{old version}
%The integral is known as Beta function, we have a well-known formula as follows
%\begin{equation*}
%\begin{split}
%& B(k-\beta + 1, n-k+1) = \frac{\Gamma(k - \beta + 1)\Gamma(n - k + 1)}{\Gamma(n + 2 - \beta)} \\
%& =\frac{(n-k)!(\prod_{i=\lceil\beta\rceil + 2}^{k}(i - \beta))\Gamma(\lceil\beta\rceil + 1-\beta) }{(\prod_{i=\lceil\beta\rceil + 2}^{n + 1}(i - \beta))\Gamma(\lceil\beta\rceil + 1 - \beta)} =\frac{(n-k)!}{\prod_{i=k+1}^{n+1}(i - \beta)}.
%\end{split}
%\end{equation*}
%
%\merk{new version}
Since $k > \beta-1$, this integral is the Beta function $B$ evaluated at $(k-\beta + 1, n-k+1)$. Using the well-known relationship to the Gamma function, we compute
\begin{equation*}
\begin{split}
& B(k-\beta + 1, n-k+1) = \frac{\Gamma(k - \beta + 1)\Gamma(n - k + 1)}{\Gamma(n + 2 - \beta)} \\
& =\frac{(n-k)!\Gamma(k - \beta + 1)}{(\prod_{i=k+1}^{n + 1}(i - \beta))\Gamma(k - \beta + 1)} =\frac{(n-k)!}{\prod_{i=k+1}^{n+1}(i - \beta)}.
\end{split}
\end{equation*}

Let  $\varepsilon = \beta - 1$ and  $A = n^{\varepsilon}\prod_{i=k}^{n}(i - \varepsilon)$. We estimate
\begin{equation*}
\begin{split}
& \ln(A) = \varepsilon\ln(n) + \sum_{i = k}^{n}\ln(i - \varepsilon) =\varepsilon\ln(n) + \sum_{i = k}^{n}\left(\ln\left(1 - \frac{\varepsilon}{i}\right) + \ln(i)\right) \\
& \le \varepsilon\ln(n) + \sum_{i = k}^{n}\left(\frac{-\varepsilon}{i} + \ln(i)\right) \\
& = \varepsilon\ln(n) -\varepsilon(\ln(n) - \ln(k) + O(k^{-1})) + \sum_{i = k}^{n}\ln(i) \\
& =\varepsilon\ln(k) + \sum_{i = k}^{n}\ln(i) + O\left(\frac{\varepsilon}{k}\right)  \le \varepsilon\ln(k) + \sum_{i = k}^{n}\ln(i) + O(1),
\end{split}
\end{equation*}
Hence,
\[\Pr[H(x,y)=k] \ge (C_{n/2}^\beta)^{-1} \binom{n}{k}\frac{(n-k)!k!}{k^\beta n! O(1)} = (C_{n/2}^\beta)^{-1} \Omega(k^{-\beta})\]
with all asymptotic notation only hiding absolute constants independent of $n$, $k$, and $\beta$.

	\item The first part simply follows from~\ref{it:elementary:1} and~\ref{it:elementary:2} by noting (a) that there are  $\binom{n}{k}$ search points $z$ such that $H(x,z)=k$ and (b) that $y$ is uniformly distributed on these when we condition on $H(x,y) = H(x,z)$. For arbitrary~$z$, we use the fact than when the mutation rate is $1/2$, then the offspring is uniformly distributed in $\{0,1\}^n$. Hence $\Pr[\fmutb(x)=z \mid \alpha = n/2] = 2^{-n}$ and $\Pr[\fmutb(x)=z] \ge (C_{n/2}^\beta)^{-1} \Omega((n/2)^{-\beta}) 2^{-n}$. 
		
	\item Allowing all implicit constants to depend on $\beta$ in this paragraph, we calculate 
	\begin{align*}
	\expect{H(x,y)} &= \sum_{k=1}^{n/2} \Pr[\alpha = k] \, \expect{H(x,y) \mid \alpha = k}\\
	&= \Theta\left(\sum_{k=1}^{n/2} k^{-\beta} \cdot k\right) 
	= \Theta\left(\sum_{k=1}^{n/2} k^{1-\beta}\right).
	\end{align*}
	For $\beta <2$, we approximate the sum by the integral and obtain $\sum_{k=1}^{n/2} k^{1-\beta} = \Theta(\int_0^{n/2} x^{1-\beta} dx) = \Theta(\frac{x^{2-\beta}}{2-\beta}\big|_0^{n/2}) = \Theta(n^{2-\beta})$. For $\beta > 2$, the series $\sum k^{1-\beta}$ converges, and for $\beta = 2$, the sum over the first $n/2$ terms of the Harmonic series is $\Theta(\ln(n))$. Hence,
\[
\expect{H(x,y)} =
\begin{cases}
\Theta(1) & \mbox{if } \beta > 2,\\
\Theta(\ln(n)) & \mbox{if } \beta = 2,\\
\Theta(n^{2-\beta}) & \mbox{if } 1 < \beta < 2.
\end{cases}
\]
\end{enumerate}
\end{proof}

\subsection{Evolutionary Algorithms Using the Heavy-tailed Mutation Operator $\fmutb$}

Since we decided to call algorithms using the heavy-tailed mutation operator $\fmutb$ \emph{fast evolutionary algorithms}, we denote the \oea using the operator $\fmutb$ from now by \foea. We do likewise for any other $(\mu+\lambda)$~EA, which we call $(\mu+\lambda)$~FEA$_\beta$ when it employs the mutation operator $\fmutb$.

In this first section analyzing the performance of fast EAs, we show that many runtime analyses remain valid for the corresponding fast EA (apart from changes in the leading constant, which in many classic results is not made explicit anyway). An elementary observation is that fast EAs use the mutation rate $1/n$ with constant probability (Lemma~\ref{lem:elementary}~\ref{it:elementary:1}). Consequently, all previous runtime analysis which are robust to interleaving with other mutation steps remain valid for the FEAs (apart from constant factor changes of the runtime). These are, in particular, all analyses of elitist EAs based on the fitness level method~\cite{Wegener01} and on drift arguments using the fitness as potential function. We list some such results in the following theorem. The reference points to the original work for the non-fast EA.

\begin{theorem}
Let $\beta > 1$. 
\begin{enumerate}
	\item The expected runtime of the \foea on the \onemax and \leadingones test functions are $O(n \log n)$~\cite{Muhlenbein92} and $O(n^2)$~\cite{Rudolph97}, respectively.
	\item The expected runtime of the \foea on the minimum spanning tree problem is $O(m^2 \log(n w_{\max}))$~\cite{NeumannW07}.
	\item For all $\lambda \le n^{1-\eps}$, the expected runtime of the $(1+\lambda)$~FEA$_\beta$ is $O(\frac{n \lambda \log\log(\lambda)}{\log(\lambda)} + n \log n)$ on \onemax, it is $O(n^2 / \lambda)$ on \leadingones~\cite{JansenJW05,DoerrK15}, and it is $O(m^2 \log(n w_{\max}) / \lambda)$ for the minimum spanning tree problem~\cite{NeumannW07}.
  \item For all $\mu \le n^{O(1)}$, the expected runtime of the $(\mu+1)$~FEA$_\beta$ is $O(\mu n + n \log(n))$ for \onemax and $O(\mu n \log(n) + n^2)$ for \leadingones~\cite{Witt06}.
\end{enumerate}
\end{theorem}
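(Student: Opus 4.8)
The plan is to leverage one structural fact about $\fmutb$: conditioned on the random choice $\alpha = 1$, the operator is exactly standard-bit mutation with rate $1/n$, and this event has constant probability, $\Pr[\alpha=1] = (C_{n/2}^\beta)^{-1} \ge 1/\zeta(\beta) = \Omega(1)$ (Lemma~\ref{lem:elementary}~\ref{it:elementary:1}). From this I get, for every event $E$ describing the outcome of a single mutation, $\Pr_{\fmutb}[E] \ge \Pr[\alpha=1]\cdot\Pr_{\mathrm{rate}\,1/n}[E] = \Omega(\Pr_{\mathrm{rate}\,1/n}[E])$; in words, passing from classic mutation to $\fmutb$ reduces the probability of any useful mutation outcome by at most a constant factor $\zeta(\beta)$.

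Next I would observe that every runtime bound listed for the non-fast algorithm is established either via the fitness-level method~\cite{Wegener01} or via a drift argument whose potential is a monotone function of the fitness, and that both techniques are stable under a uniform constant-factor weakening of the per-step improvement probabilities (resp.\ expected fitness gains): the fitness-level bound $\sum_i 1/s_i$ and every multiplicative-drift bound of the form $\delta^{-1}(1+\ln\phi_{\max})$ simply scale by that constant. The missing ingredient is that the ``extra'' mutation steps of $\fmutb$ (those with $\alpha\ge2$) are harmless for an elitist algorithm: such an offspring is either rejected by selection or accepted without worsening the best-so-far fitness, so the fitness potential never increases, and hence $\mathbb{E}_{\fmutb}[\text{potential drop}] \ge \Pr[\alpha=1]\,\mathbb{E}_{\mathrm{rate}\,1/n}[\text{potential drop}]$, with the analogous inequality for level probabilities. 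Putting these together, each cited bound carries over to the corresponding fast EA with the implicit constant multiplied by $\zeta(\beta)=O(1)$ and with no change in the dependence on $n$, $\lambda$, or $\mu$. (An equivalent intuition: $\fmutb$ dominates, iteration by iteration, an algorithm that with probability $\Pr[\alpha=1]$ performs a classic rate-$1/n$ step and otherwise idles, and the latter is just the classic EA with an independent $O(1)$-expectation slowdown per iteration.)

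To instantiate this for the four items I would only need a few one-line checks. For item~1, the textbook fitness-level analysis of the \oea uses $s_i \ge (n-i)/(en)$ on \onemax and $s_i\ge 1/(en)$ on \leadingones, which give $\sum_i 1/s_i = O(n\log n)$ and $O(n^2)$; under $\fmutb$ a prescribed single bit is flipped with probability $\Theta(1/n)$ (Lemma~\ref{lem:elementary}~\ref{it:worstprob}), so the same sums hold up to the constant $\zeta(\beta)$. For item~2, the Neumann--Wegener analysis of the minimum spanning tree problem (with bit-string length equal to the number $m$ of edges) is a multiplicative-drift argument whose moves are single-edge flips and edge-swap $2$-bit flips; since already the $\alpha=1$ term gives $\Pr[H(x,y)=1]=\Theta(1)$ and $\Pr[H(x,y)=2]=\Theta(1)$, and since the flipped set is uniform conditioned on its size, $\fmutb$ performs any prescribed $1$- or $2$-bit flip with probability $\Omega(1/m)$ resp.\ $\Omega(1/m^2)$ for \emph{every} $\beta>1$, so $O(m^2\log(nw_{\max}))$ is unchanged. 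Items~3 and~4 follow identically from the analyses in the references cited in the theorem, which are likewise driven by per-offspring and per-individual improvement (and replication) probabilities, each reproduced by $\fmutb$ up to a factor $\zeta(\beta)$.

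The real work --- and the reason I would state the result in this ``the classic bounds still hold'' form rather than spelling out every case --- is the bookkeeping check that none of the cited upper-bound proofs secretly relies on an \emph{upper} bound on the number of flipped bits, which would genuinely fail for $\fmutb$ when $1<\beta<2$ (there $\expect{H(x,y)}=\Theta(n^{2-\beta})$ by Lemma~\ref{lem:elementary}~\ref{it:expect_hamming}). For the elitist algorithms and benchmark problems considered here this is automatic: an oversized offspring can only be discarded, never turned against the algorithm, so such bounds never enter the original upper-bound arguments, and the routine re-derivations can be omitted.
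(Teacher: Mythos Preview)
Your proposal is correct and follows essentially the same approach as the paper. The paper's own argument is the single paragraph preceding the theorem: fast EAs use mutation rate $1/n$ with constant probability (citing Lemma~\ref{lem:elementary}~\ref{it:elementary:1}), so all upper-bound analyses based on the fitness-level method or on drift arguments with a fitness-based potential carry over up to constant factors, because elitist selection makes the interleaved $\alpha\ge 2$ steps harmless. You have spelled out the same idea in considerably more detail---including the explicit check that none of the cited proofs relies on an upper bound on the number of flipped bits---but the core mechanism is identical.
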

%\merk{MST: Improved constant due to higher rate of 2-bit flips for certain betas?}

For the classic \oea with mutation rate $1/n$, it is known that it finds the optimum of any pseudo-Boolean fitness function in an expected number of at most $n^n$ iterations. This bound is tight in the sense that there are concrete fitness functions for which an expected optimization time of $\Omega(n^n)$ could be proven. These are classic results from~\cite[Theorem~6 to~8]{DrosteJW02}. 

Moreover, also problems that generally are perceived as easy may have instances where the classic \oea needs $n^{\Theta(n)}$ time to find the optimum. This was demonstrated for the minimum makespan scheduling problem~\cite{Witt05}, see also~\cite[Theorem~7.5 and Lemma 7.8]{Neumann2012BioinspiredCI}. While in average ($n$ jobs having random lengths in $[0,1]$) the classic \oea approximates the optimum up to an additive error of $1$ in time $O(n^2)$, there are instances of $n$ jobs with processing times in $[0,1]$ such that the \oea needs time $n^{\Omega(n)}$ to only find a solution that is better than $\frac 43$ times the optimum. 

The following result shows that fast EAs only have an exponential worst-case runtime as opposed to the super-exponential times  just discussed.

\begin{theorem}\label{thm:worstcase}
  Let $n \in \N$ and $\beta > 1$. Consider any fast EA creating at least a constant ratio of its offspring via the $\fmutb$ mutation operator. Then its expected optimization time on any fitness function $f : \{0,1\}^n \to \R$ is at most $O(C^\beta_{n/2} 2^n n^{\beta})$.
\end{theorem}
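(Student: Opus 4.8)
The plan is to exploit the worst-case transition bound of Lemma~\ref{lem:elementary}~\ref{it:worstprob}: for \emph{any} parent $x \in \{0,1\}^n$ and \emph{any} target $z \in \{0,1\}^n$, one has $\Pr[\fmutb(x) = z] \ge q$, where $q := (C^\beta_{n/2})^{-1} c_0\, 2^{-n} n^{-\beta}$ for a suitable absolute constant $c_0 > 0$. The crucial feature is that $q$ is independent of the current state of the algorithm; it comes simply from conditioning on the event $\alpha = n/2$, under which the offspring is uniformly distributed on $\{0,1\}^n$ and hence equals $z$ with probability $2^{-n}$, while $\Pr[\alpha = n/2] = (C^\beta_{n/2})^{-1}(n/2)^{-\beta}$. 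Fix an optimum $z^\ast$ of $f$ and a constant $c \in (0,1]$ such that at least a $c$-fraction of all offspring ever produced are generated via $\fmutb$. Whenever such an offspring is created, it equals $z^\ast$ with probability at least $q$, regardless of which population member was chosen as its parent.

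Next I would run a stochastic-domination argument. List the offspring produced via $\fmutb$ over the whole (infinite) run as $y^{(1)}, y^{(2)}, \dots$ in order of creation, and let $\mathcal{F}_{i-1}$ denote the history up to and including the selection of the parent of $y^{(i)}$. Then $\Pr[y^{(i)} = z^\ast \mid \mathcal{F}_{i-1}] \ge q$, so by the usual induction $\Pr[N > i] \le (1-q)^i$, where $N$ is the index of the first $\fmutb$-offspring equal to $z^\ast$; consequently $N$ is stochastically dominated by a geometric random variable with success probability $q$ and $\expect{N} \le 1/q = O(C^\beta_{n/2}\, 2^n n^{\beta})$. When $y^{(N)}$ is evaluated, the optimum has been found. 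Converting the count of $\fmutb$-offspring into a bound on the total number of fitness evaluations costs a factor $1/c$ (plus the $\mu$ evaluations of the initialization and, if a generation's offspring are evaluated as a batch, at most one extra generation of size $\lambda$), so the optimization time $T$ satisfies $T \le \mu + \lambda + N/c$. Taking expectations and using that $\mu$ and $\lambda$ are polynomially bounded in~$n$ (so that these additive terms are dominated by the exponential main term) yields $\expect{T} = O(C^\beta_{n/2}\, 2^n n^{\beta})$, as claimed.

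There is no real obstacle here; essentially all the content has already been isolated in Lemma~\ref{lem:elementary}~\ref{it:worstprob}. The only two points needing mild care are (i) phrasing the domination so that it handles the dependence between successive $\fmutb$-offspring correctly — this is exactly where state-independence of the bound $q$ is used, via conditioning on the full history — and (ii) the bookkeeping that passes from ``a constant fraction of offspring use $\fmutb$'' to a bound on the number of fitness evaluations; both are routine.
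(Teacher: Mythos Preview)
Your proposal is correct and follows essentially the same approach as the paper: both rely on Lemma~\ref{lem:elementary}~\ref{it:worstprob} to get a state-independent lower bound $\Omega((C^\beta_{n/2})^{-1} 2^{-n} n^{-\beta})$ on the probability that any single $\fmutb$-offspring is optimal, and then take the reciprocal. Your version is simply a careful unpacking of the paper's one-line proof, making explicit the geometric-domination step and the bookkeeping for the constant offspring ratio and initialization costs that the paper leaves implicit.
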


\begin{proof}
The claim follows immediately from Lemma~\ref{lem:elementary}~\ref{it:worstprob}: Since each offspring generated using $\fmutb$, regardless of the current state of the algorithm, with probability at least $\Omega(2^{-n} n^{-\beta})$ is an optimal solution, the expected optimization time is at most the reciprocal of this number.
\end{proof}

\subsection{Runtime Analysis for the \foea Optimizing Jump Functions}

We now proceed with analyzing the performance of the \foea on jump functions. We show that the \foea for all $m \in [2..n/2]$ with $m > \beta-1$ has an expected optimization time of $O(C^\beta_{n/2} m^{\beta-0.5} T_{\opt}(m,n))$ for the function $\jump_{m,n}$. By a mild abuse of notation, we denote by $T_\beta(m,n)$ the expected optimization time of the \foea on the $\jump_{m,n}$ function.

\begin{theorem}
\label{th:runtime}
  Let $n \in \N$ and $\beta > 1$. For all $m \in [2..n/2]$ with $m > \beta-1$, the expected optimization time $T_\beta(m,n)$ of the \oea with mutation operator $\fmutb$ is \[T_\beta(m,n) = O(C_{n/2}^\beta m^{\beta-0.5} T_{\opt}(m,n)),\] where the constants implicit in the big-Oh notation can be chosen independent from $\beta$, $m$ and $n$.
\end{theorem}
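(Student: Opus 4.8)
The plan is to repeat, for the \foea, the fitness-level upper-bound argument used in the proof of Theorem~\ref{thm:optp}, substituting the transition probabilities of $\fmutb$ for those of standard-bit mutation. As there, the level sets $A_i=\{x\in\{0,1\}^n:\|x\|_1=i\}$ are the fitness levels of $\jump_{m,n}$, ordered by increasing fitness as $A_{n-1},A_{n-2},\dots,A_{n-m+1},A_0,A_1,\dots,A_{n-m},A_n$. Since the \foea is elitist, the fitness-level theorem~\cite{Wegener01} applies once we have, for each $i$, a lower bound $s_i$ on the probability that one iteration started in $A_i$ yields an offspring of strictly larger fitness.

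First I would record the needed $s_i$ using Lemma~\ref{lem:elementary}. Because $\fmutb$ chooses mutation rate $1/n$ with probability exactly $(C_{n/2}^\beta)^{-1}$ and standard-bit mutation with rate $1/n$ flips any prescribed single bit with probability at least $1/(en)$, from a level $A_i$ with $0\le i\le n-m-1$ I may take $s_i:=(C_{n/2}^\beta)^{-1}(n-i)/(en)$ (flip one of the $n-i$ zero bits), and from a valley level $A_i$ with $n-m+1\le i\le n-1$ I may take $s_i:=(C_{n/2}^\beta)^{-1}\,i/(en)$ (flip one of the $i$ one bits, which either stays in the valley at higher fitness or reaches $A_{n-m}$). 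From the local optimum $A_{n-m}$ the only improving move flips exactly the $m$ zero bits, so by Lemma~\ref{lem:elementary}~\ref{it:worstprob} (applicable since $m>\beta-1$) I may take $s_{n-m}:=(C_{n/2}^\beta)^{-1}\,\Omega(m^{-\beta})\binom nm^{-1}$ with an absolute implicit constant. Summing reciprocals gives
\[
T_\beta(m,n)\le\sum_{i=0}^{n-1}\frac1{s_i}\le C_{n/2}^\beta\!\left(en\!\!\sum_{j=m+1}^{n}\!\frac1j+O\!\big(m^\beta\big)\binom nm+en\!\!\sum_{i=n-m+1}^{n-1}\!\frac1i\right).
\]
The first sum is $O(n\log n)$ and the last is $O(m)=O(n)$ (each term is below $2/n$ since $n-m+1>n/2$, and there are fewer than $m$ of them), so $T_\beta(m,n)\le C_{n/2}^\beta\big(O(n\log n)+O(m^\beta)\binom nm\big)$.

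It then remains to bound both contributions by $O(m^{\beta-0.5}T_{\opt}(m,n))$. For the jump term I would use Stirling's formula with explicit error bounds (so that all constants stay absolute) to obtain $\binom nm=\Theta\!\big(\sqrt{n/(m(n-m))}\big)\,\frac{n^m}{m^m}(\frac n{n-m})^{n-m}=\Theta(m^{-0.5})\,\frac{n^m}{m^m}(\frac n{n-m})^{n-m}$, where the last step uses $n/2\le n-m<n$; by Corollary~\ref{cor:optp} this is $\Theta(m^{-0.5}T_{\opt}(m,n))$, hence $O(m^\beta)\binom nm=O(m^{\beta-0.5}T_{\opt}(m,n))$. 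For the $O(n\log n)$ term, note $\binom nm\ge\binom n2=\Omega(n^2)$ for all $m\in[2..n/2]$, so $m^{\beta-0.5}T_{\opt}(m,n)=\Theta(m^{\beta-1}\binom nm)=\Omega(n^2)$ since $m^{\beta-1}\ge1$; as $n\log n=o(n^2)$, the \onemax-ascent term is absorbed. Combining the two estimates yields $T_\beta(m,n)=C_{n/2}^\beta\cdot O(m^{\beta-0.5}T_{\opt}(m,n))$ with an absolute implicit constant, as claimed.

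No step here is genuinely hard: the whole computation is essentially a transcription of the proof of Theorem~\ref{thm:optp} with $\fmutb$'s probabilities. The one place needing a little care is the Stirling estimate that isolates the $m^{-0.5}$ factor uniformly over $m\in[2..n/2]$, together with checking that the $O(n\log n)$ cost of climbing the \onemax part is always dominated by the $\Theta(m^{-0.5}T_{\opt})$ cost of the single $m$-bit jump.
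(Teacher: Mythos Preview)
Your proposal is correct and follows essentially the same route as the paper: fitness-level method on the levels $A_i$, single-bit-flip lower bounds for the easy levels, Lemma~\ref{lem:elementary} for the $m$-bit jump from $A_{n-m}$, Stirling to extract the $m^{-0.5}$ factor, and the observation that the $O(n\log n)$ contribution is dominated by the jump term. The only cosmetic differences are that you condition directly on $\alpha=1$ (yielding the factor $(C_{n/2}^\beta)^{-1}/(en)$) where the paper invokes $P^\beta_1$ from Lemma~\ref{lem:elementary}\ref{it:elementary:1}, and you cite part~\ref{it:worstprob} of that lemma for the jump while the paper cites part~\ref{it:elementary:2}; both pairs of choices give the same bounds up to absolute constants.
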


We do not discuss the case $m \le \beta-1$. For $\beta<3$, this case does not exist, and we do not have any indication that larger values of $\beta$ are useful.

\begin{proof}
We use the same notation as in the proof of Theorem~\ref{thm:optp} and in the statement of Lemma~\ref{lem:elementary}. For $i\in [0..n-1]$, let
\[
s_i \coloneqq
\begin{cases}
\frac{n - i}{n} P^\beta_1 & \mbox{if } 0 \leq i \leq n - m - 1,\\
\binom{n}{m}^{-1} P^\beta_m & \mbox{if } i = n - m,\\
\frac{i}{n} P^\beta_1 & \mbox{otherwise.}
\end{cases}
\]
Then $s_i$ is a lower bound for the probability that an iteration starting with a search point $x\in A_i$ ends with a search point of higher fitness. By the fitness level theorem, we obtain an upper bound for $T_\beta(m,n) $ by computing
\begin{align}
 T_\beta(m,n) &\leq \sum_{i = 0}^{n - 1}\frac{1}{s_i} \nonumber\\ 
& = \sum_{i = 0}^{n - m -1}\frac{n}{(n - i) P^\beta_1}  + \binom{n}{m} (P^\beta_m)^{-1} +\sum_{i = n - m + 1}^{n - 1} \frac{n}{i P^\beta_1}.\label{eq:sum}
\end{align}

To estimate the second term, we use the Stirling approximation $\sqrt{2\pi} n^{n+0.5} e^{-n} \le n! \le e n^{n+0.5} e^{-n}$ valid for all $n \in \N$. Using Lemma~\ref{lem:elementary} and noting that $n / (n-m) \le 2$, we compute
\begin{align*}
  \binom{n}{m} (P^\beta_m)^{-1} &= C^\beta_{n/2} O(m^\beta) \frac{n!}{m! (n-m)!} \\
  &= O(1) C^\beta_{n/2} m^{\beta-0.5} \frac{n^n}{m^m (n-m)^{n-m}} \\
  &= O(1) C^\beta_{n/2} m^{\beta-0.5} T_{\opt}(m,n). 
\end{align*}
Since this term is at least $\Omega(C^\beta_{n/2} n^2)$, whereas the sum of the first and third term in~\eqref{eq:sum} is at most $O(C^\beta_{n/2} n \log n)$, we have $ T_\beta(m,n) = O(C^\beta_{n/2} m^{\beta-0.5} T_{\opt}(m,n))$ as claimed.

%Moreover, denote by $x_{n-m} $ a search point in level $A_{n-m}$ and let $x_{\opt} = (1,...,1)$, we have 
%\[ E_{n-m} = \frac{1}{\Pr[\fmutb(x_{n-m})=x_{\opt}]}.\]
%By Lemma  \ref{lem:elementary}~\ref{it:elementary:2}, $\Pr[\fmutb(x_{n-m})=x_{\opt}] = \left(C_{n/2}^\beta \binom{n}{m}\right)^{-1}\Omega\left(m^{-\beta}\right)$, hence,
%\[E_{n-m}= C_{n/2}^\beta \binom{n}{m}O\left(m^{\beta}\right)= O\left(1\right)\frac{C_{n/2}^\beta m^{\beta}n!}{m!(n-m)!},\]
%where the constant implicit in the big-Oh notation is independent of $n$ and $\beta$. By applying Stirling's approximation $n!\approx \sqrt{2\pi n}\left(\frac{n}{e}\right)^n$ and $m!\approx \sqrt{2\pi m}\left(\frac{m}{e}\right)^m$, $(n-m)!\approx \sqrt{2\pi (n-m)}\left(\frac{n}{e}\right)^{n-m}$, notice that since $m \le n/2$, $\sqrt{\frac{n}{n-m}} \le \sqrt{2}$, we obtain
%\[E_{n-m} = \frac{O(1)C_{n/2}^\beta m^{\beta - 0.5}n^n}{m^m (n-m)^{n-m}}.\]
%Moreover, similar to the argument used in Lemma \ref{lm:optp},
%\[\frac{2\ln(n/m)}{\frac{1}{n}(1-\frac{1}{n})^{n-1}\Pr[X = 1]} \le \frac{2C_{n/2}^\beta n^n}{m^m (n-m)^{n-m}}\]
%On the other hand, $T_{\opt}(m,n) = \frac{O(1)n^n}{m^m (n-m)^{n-m}}$. 
%So, we can conclude that $ T_\beta(m,n) = O\left(C_{n/2}^\beta m^{\beta - 0.5}T_{\opt}(m,n)\right)$, where the constants implicit in the big-Oh notation can be chosen independent from $\beta$, $m$ and $n$.
\end{proof}

We remark that this runtime analysis is tight, but given that we prove a very similar lower bound in the subsequent section, we omit a proof for this claim.

\subsection{A Lower Bound for a Uniformly Good Performance on all Jump Function}

The runtime analysis of the previous subsection showed that the \oea with the heavy-tailed mutation operator optimizes any $\jump_{m,n}$ function in an expected time that is only a small polynomial (in $m$) factor larger than the runtime stemming from the (for this $m$) optimal mutation rate. When aiming at algorithms that are not custom-tailored for a particular value of $m$, this is a great improvement over any fixed mutation rate, which gives a runtime slower than $T_{\opt}(m,n)$ by a factor exponential in $m$ for many values of $m$, see Corollary~\ref{cor:optp}. Still, the question remains if this relatively small polynomial factor increase is necessary. In this section, we answer this affirmatively. While taking $\beta = 1 + \varepsilon$ can reduce the loss factor to $\Theta(m^{0.5+\varepsilon})$ for any $\varepsilon>0$, no randomized choice of the mutation probability can uniformly obtain a loss factor of $\sqrt m$ (or lower). To this aim, let us extend the definition of $T_\beta(m,n)$ and denote by $T_D(m,n)$ the expected number of iterations it takes the \oea to find the optimum of $\jump_{m,n}$ when in each iteration the mutation rate is chosen randomly according to the distribution $D$. 

\begin{theorem}
\label{th:lower_bound_unif_performance}
Let $c > 0$ and let $n$ be sufficiently large. Then for every distribution $D$ on $[0,1/2]$, there exists an $m \in [2..n/2]$ such that \[T_{D}(m, n)  \ge c\sqrt{m}T_{\opt}(m,n).\]
\end{theorem}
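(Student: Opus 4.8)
The plan is to reduce the statement to an averaging argument over $m$. Write $g_m(p):=p^m(1-p)^{n-m}$ and $h_m(p):=g_m(p)/g_m(m/n)$, where $g_m(m/n)=(m/n)^m((n-m)/n)^{n-m}$ is the maximum of $g_m$ on $[0,1/2]$ (the maximum is at $p=m/n\le 1/2$, as noted in the proof of Corollary~\ref{cor:optp}). First I would prove the lower bound $T_D(m,n)\ge\tfrac12\big(\mathbb{E}_{p\sim D}[g_m(p)]\big)^{-1}$ by running the proof of Theorem~\ref{thm:optp} essentially verbatim, the only change being that $\alpha_{k,n}$ and $E_{n-m}$ now carry an expectation over the per-iteration random choice $p\sim D$; concretely $E_{n-m}=\big(\mathbb{E}_{p\sim D}[p^m(1-p)^{n-m}]\big)^{-1}$ and $E_{n-m}\alpha_{k,n}=\mathbb{E}_{p\sim D}[p^{n-k}(1-p)^k]/\mathbb{E}_{p\sim D}[p^m(1-p)^{n-m}]$. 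Since $k\le n-m-1$ and $p\le 1/2$, we have $p^{n-k}(1-p)^k=g_m(p)\,(p/(1-p))^{n-m-k}\le g_m(p)$ pointwise on $[0,1/2]$, so the key induction inequality $E_{n-m}\alpha_{k,n}\le 1$ survives taking expectations, and the induction $E_i\ge E_{n-m}$ for $i\le n-m$ goes through as before. Combining with $T_{\opt}(m,n)\le 3/g_m(m/n)$ (Corollary~\ref{cor:optp}) gives
\[
\frac{T_D(m,n)}{T_{\opt}(m,n)}\;\ge\;\frac{g_m(m/n)}{6\,\mathbb{E}_{p\sim D}[g_m(p)]}\;=\;\frac{1}{6\,\mathbb{E}_{p\sim D}[h_m(p)]},
\]
so it suffices to exhibit $m\in[2..n/2]$ with $\mathbb{E}_{p\sim D}[h_m(p)]\le\tfrac{1}{6c\sqrt m}$.

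\textbf{Key step: a uniform weighted bound on the bump functions.} I claim there is an absolute constant $K$ with $\sum_{m=2}^{n/2}h_m(p)/\sqrt m\le K$ for every $p\in[0,1/2]$. This is short: by the Stirling bounds $\sqrt{2\pi}\,n^{n+1/2}e^{-n}\le n!\le e\,n^{n+1/2}e^{-n}$ (already used in the proof of Theorem~\ref{th:runtime}), one obtains $\binom{n}{m}(m/n)^m((n-m)/n)^{n-m}\ge\tfrac{\sqrt{2\pi}}{e^2}\sqrt{n/(m(n-m))}$, hence
\[
\frac{h_m(p)}{\sqrt m}\;=\;\frac{\binom{n}{m}p^m(1-p)^{n-m}}{\sqrt m\,\binom{n}{m}(m/n)^m((n-m)/n)^{n-m}}\;\le\;\frac{e^2}{\sqrt{2\pi}}\,\binom{n}{m}p^m(1-p)^{n-m}\sqrt{\tfrac{n-m}{n}}\;\le\;\frac{e^2}{\sqrt{2\pi}}\,\binom{n}{m}p^m(1-p)^{n-m},
\]
and summing over $m$ gives $\sum_{m=2}^{n/2}h_m(p)/\sqrt m\le\tfrac{e^2}{\sqrt{2\pi}}\sum_{m=0}^{n}\binom{n}{m}p^m(1-p)^{n-m}=e^2/\sqrt{2\pi}=:K$. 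The point is that $h_m(p)$ equals, up to a factor $\Theta\!\big(\sqrt{m(n-m)/n}\,\big)$, the binomial probability $\Pr[\mathrm{Bin}(n,p)=m]$, so dividing by $\sqrt m$ turns a family of heavily overlapping bumps into weights that nevertheless sum to $O(1)$ uniformly in $p$. The $1/\sqrt m$ weighting is exactly what is needed here — it is the unique exponent that keeps this weighted bump sum bounded while making the comparison sum below unbounded.

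\textbf{Conclusion by averaging.} Taking expectations in the uniform bound,
\[
\sum_{m=2}^{n/2}\frac{\mathbb{E}_{p\sim D}[h_m(p)]}{\sqrt m}\;=\;\mathbb{E}_{p\sim D}\!\left[\sum_{m=2}^{n/2}\frac{h_m(p)}{\sqrt m}\right]\;\le\;K.
\]
If, on the contrary, $\mathbb{E}_{p\sim D}[h_m(p)]>\tfrac{1}{6c\sqrt m}$ held for every $m\in[2..n/2]$, the left-hand side would exceed $\tfrac{1}{6c}\sum_{m=2}^{\lfloor n/2\rfloor}\tfrac1m\ge\tfrac{1}{6c}(\ln(n/3)-1)$, which is larger than $K$ once $n>3\exp(1+6cK)$. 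This contradiction yields some $m^\ast\in[2..n/2]$ with $\mathbb{E}_{p\sim D}[h_{m^\ast}(p)]\le\tfrac{1}{6c\sqrt{m^\ast}}$, and then the ratio bound of the first paragraph gives $T_D(m^\ast,n)\ge c\sqrt{m^\ast}\,T_{\opt}(m^\ast,n)$. (If $D$ is the point mass at $0$ the claim is trivial since then $T_D(m,n)=\infty$ for all $m$; otherwise $\mathbb{E}_{p\sim D}[g_m(p)]>0$ and all the divisions above are legitimate.)

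\textbf{Where the work is.} The slick steps are the Stirling estimate, the harmonic-sum lower bound, and the averaging; I expect the one place requiring genuine care to be the lower bound $T_D(m,n)\ge\tfrac12(\mathbb{E}_{p\sim D}[g_m(p)])^{-1}$ for a mutation rate that is re-randomized every iteration, where one must confirm that $(\lVert x_t\rVert_1)_t$ remains a Markov chain (it does, by symmetry of both $\fmutb$-style mutation and $\jump_{m,n}$) and that the search point cannot enter the fitness valley $A_{n-m+1},\dots,A_{n-1}$ from the plateau or from below — this keeps the level induction $E_i\ge E_{n-m}$ valid exactly as in Theorem~\ref{thm:optp}.
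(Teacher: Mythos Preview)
Your proof is correct and is essentially the paper's argument, reorganized. Both proofs (i) adapt the level-induction of Theorem~\ref{thm:optp} to a per-iteration random rate to obtain $T_D(m,n)\ge\tfrac12\big(\mathbb{E}_{p\sim D}[p^m(1-p)^{n-m}]\big)^{-1}$, (ii) multiply by $\binom{n}{m}$ and invoke Stirling so that the $p$-dependence collapses via the binomial theorem to a uniform $O(1)$ bound, and (iii) contrast this with the harmonic divergence $\sum_{m}1/m=\Omega(\log n)$ to force a contradiction. The only difference is the order of presentation: you isolate the pointwise inequality $\sum_{m=2}^{n/2} h_m(p)/\sqrt m\le e^2/\sqrt{2\pi}$ as a standalone ``uniform bump bound'' and take expectations afterwards, whereas the paper first assumes the contradiction, multiplies each resulting inequality by $\binom{n}{m}$, sums, and then bounds the two sides; the Stirling and binomial-theorem steps are identical in content.
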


\begin{proof}
As in the proof of Theorem~\ref{thm:optp}, we first argue that also when using a random mutation rate $p$ distributed according to $D$, then the expected optimization time (essentially) is at least the time needed to jump from the second highest fitness level to the optimum. This part of the proof is very similar to Theorem~\ref{thm:optp}. For reasons of completeness, we still give it in the following.

We partition the search space into the nonempty level $A_i = \{x \mid  \lVert x\rVert_1 = i\}$, $0 \leq i \leq n$ and denote by $E_{i, D}$ the expected number of iterations it takes the algorithm to find the optimum when the starting point is at level $A_{i}$. Let $\alpha_{i,j}$ be the probability that one iteration of the algorithm (consisting of mutation and selection) starting at level $A_i$ ends in level $A_j$. We prove by induction that for $0 \leq i \leq n - m $, we have $E_{i,D} \geq E_{n - m, D}$. This trivially holds for $i = n - m$. Suppose that it holds for all $i$ such that $k + 1 \leq i \leq n - m$. We prove that it also holds for $i = k$. We have 
\[E_{k, D} =1 + \sum_{j = k}^{n - m} \alpha_{k,j}E_{j, D}.\]
By induction hypothesis,
\[(1 - \alpha_{k,k})E_{k, D} \geq 1 + \left(\sum_{j = k + 1}^{n - m} \alpha_{k,j}\right) E_{n - m, D},\]
and thus,
\[E_{k, D} \geq \frac{1}{1 - \alpha_{k,k}} + \frac{1 - \alpha_{k,k} - \alpha_{k, n}}{1 - \alpha_{k,k}} E_{n - m, D}.\]

Let $p$ be distributed according to $D$. Then $E_{n-m, D} =  \expect{(p^m(1 - p)^{n - m})^{-1}}$ and $\alpha_{k, n} = \expect{p^{n - k}(1 - p)^k}$. Therefore, since $p \le \frac{1}{2}$ and $k  \le n - m$, we have
$$p^{n - m - k}(1 - p)^{k - (n - m)} = \left( \frac{p}{1 - p}\right)^{n - m - k} \le 1$$
and thus 
$$p^{n-k}(1 - p)^k \le p^m(1 - p)^{n - m}.$$
Therefore, $ \expect{p^{n - k}(1 - p)^k} \le \expect{p^m(1 - p)^{n - m}} $, that is, $E_{n-m, D}\alpha_{k, n}  \leq 1$, and consequently $E_{k, D} \geq E_{n-m, D}$. By induction, we conclude that $E_{i, D} \geq E_{n - m, D}$ for all $0 \leq i \leq n - m $. 

Let $x$ denote the random initial search point. Then the above estimate yields
\begin{equation*}
    \begin{split}
        T_{D}(m,n)  & \geq \sum_{i = 0}^{n - m} \Pr[x \in A_i] E_{i, D} \\ & \geq \left(1 - \binom{n}{\le m-1} 2^{-n}\right) E_{n - m, D}\geq \frac{1}{2\;\expect{p^m(1 - p)^{n - m}}}.
    \end{split}
\end{equation*}
Assume that $D$ is such that for all $m \in [2..n/2]$, we have $T_{D}(m,n) < c\sqrt{m} T_{\opt}(m,n)$. Then
\[2 \; \expect{p^m (1 - p)^{n - m}} > \frac{1}{3c}\frac{1}{\sqrt{m}}\left(\frac{m}{n}\right)^m\left(1 - \frac{m}{n}\right)^{n - m}\]
holds for all $m \in [2..n/2]$. Multiplying this inequality by $\binom{n}{m}$ then summing over all $m \in [2..n/2]$, we obtain
\[2 \; \expect{\sum_{m = 2}^{n/2} p^m (1 - p)^{n - m} \binom{n}{m}} > \frac{1}{3c}\sum_{m = 2}^{ n/2 }\frac{1}{\sqrt{m}}\left(\frac{m}{n}\right)^m\left(1 - \frac{m}{n}\right)^{n - m}\binom{n}{m}.\]
The left-hand side is less than $ 2 \; \expect{\sum_{m = 0}^{n} p^m (1 - p)^{n - m} \binom{n}{m}} = 2 \; \expect{(p + (1 - p))^n} = 2$. Hence
\[2> 
\frac{1}{2c}\sum_{m = 2}^{ n/2 }\frac{1}{\sqrt{m}}\left(\frac{m}{n}\right)^m\left(1 - \frac{m}{n}\right)^{n - m}\binom{n}{m}. \]
With Stirling's approximation, we obtain
\begin{align*}
2 &> \frac{1}{2c}\sum_{m = 2}^{ {n/2} }\frac{1}{\sqrt{m}}\frac{m^m(n-m)^{n-m}}{n^n}\frac{\sqrt{2\pi n}n^n}{e \sqrt{m}m^m e \sqrt{(n - m)}(n - m)^{n-m}} \\
&=\Omega\bigg(\sum_{m = 2}^{ {n/2} }\frac{1}{m}\bigg) = \Omega(\log(n)),
\end{align*}
a contradiction.
\end{proof}

%Once more, we have to skip the proof. Our argument consisted in assuming that a distribution $D$ of the mutation rate exists such that for each $m$ in $[2..n/2]$, $T_{D}(m, n)  \le c\sqrt{m}T_{\opt}(m,n) $. Using a similar induction argument as for lemma \ref{lm:optp}, this would mean that for each $m$, the expected runtime before performing the final $m$-sized jump is also lower than $c\sqrt{m}T_{\opt}(m,n) $. However, this runtime is $\expect{p^m(1 - p)^{n - m}}^{-1}$, and for each fixed $p \in [0, 1]$ we have the binomial formula
%$$ 1 = (p + 1 - p)^n = \sum_{m=0}^n \binom{n}{m} p^m (1 - p)^{n - m}.$$
%Thus,
%$$ 1 = \sum_{m=0}^n \binom{n}{m} \expect{p^m (1 - p)^{n - m}} \ge \sum_{m=2}^{n/2} \binom{n}{m} \left(c\sqrt{m} T_{\opt}(m,n)\right)^{-1}$$
%and with some estimation work we get to something of the form
%$$ 1 \ge C \log(n),$$
%which is not true when $n$ is large enough.

%\section{Other Test Functions}

\subsection{Combinatorial Optimization Problems}

In this subsection, we show how our heavy-tailed mutation operator improves two existing runtime results for combinatorial optimization problems. Since the main argument in both cases is that some multi-bit flips used in the previous analyses now occur with much higher rate, we do not give (that is, repeat) the full proofs, but only sketch the main arguments. 

\subsubsection{Maximum Matching Problem} 

Let $\eps > 0$ be a constant. In~\cite{GielW03, GielW04}, see also~\cite[Section 6.2]{Neumann2012BioinspiredCI}, it is proven that the standard \oea in any undirected graph having $n$ edges finds a near-maximal matching of size at least $OPT / (1+\eps)$ in expected time  $O(n^{m+1})$, where $m = 2 \lceil\frac{1}{\epsilon} \rceil - 1$. We now show that the \foea improves this bound to $O(C_{n/2}^\beta(e/m)^m m^{\beta-0.5} n^{m+1})$, that is, roughly by a factor of $m^{\Theta(m)}$.

Let $G = (V, E)$ be an undirected graph. Let $n := |E|$. The \emph{maximum matching problem} consists in finding a largest subset $M^*$ of $E$ such no two edges in $M^*$ have a vertex in common. This problem can be solved via EAs by taking $S = \left\{0,1\right\}^n$ as search space with each bit encoding whether a given edge is part of the solution or not. For $s \in S$, let $p_{s}(v) = \max(0, d_s(v) - 1)$ for each vertex $v \in V$, where  $d_s(v)$ is the number of edges in $s$ that are incident with~$v$. Then $p(s) = \sum_{v \in V} p_s(v)$ is a penalty term measuring how far our solution deviates from being a matching. As fitness function we use $f(s) = (-p(s), \lVert s \rVert_1)$, which is to be maximized with respect to the lexicographic order. It is easy to see that both the \oea and the \foea in time $O(n \log n)$ reach a search point that is a matching. The key observation in~\cite{GielW03,GielW04} is that if $M$ is some suboptimal matching, then there exists an augmenting path with respect to $M$ whose length is bounded from above by $L := 2\lfloor|M|/(|M^{*}| - |M|)\rfloor + 1$. Consequently, by flipping all bits corresponding to edges on this path, we can increase the size of the matching by one. Hence after at most $n$ times flipping the bits of a path of length $m$, we have a matching of cardinality $|M^*|/(1+\eps)$. This takes time $O(n \cdot n^{m}) = O(n^{m+1})$ for the \oea, giving the result of~\cite{GielW03,GielW04}. For the \foea, by Lemma~\ref{lem:elementary}~\ref{it:worstprob}, this time is \[O\bigg(n C_{n/2}^\beta m^\beta \binom{n}{m}\bigg) = O\bigg(C_{n/2}^\beta \bigg(\frac{(1+o(1))e}{m}\bigg)^m m^{\beta-0.5} n^{m+1}\bigg).\]

%We first state the following lemma (its proof can be found in \cite{Neumann2012BioinspiredCI}):
%\begin{lemma}
%Let $G = (V, E)$ be a graph, $M$ a non-maximum matching, and $M^{*}$ a maximum matching. Then there exists an augmenting path with respect to $M$ whose length is bounded from above by $L := 2\lfloor|M|/(|M^{*}| - |M|)\rfloor + 1$.
%\end{lemma}
%We denote by $p \sim D_{n/2}^\beta \in [0, 1/2]$ the mutation rate given by the operator. The optimization can be done in two phases. In the first phase, the expected time to go from an initial search point to a matching at most is \[\sum_{i = 1}^{n}\frac{1}{i\expect{ p (1 - p)^{n - 1}}} = O(n\ln(n)).\] Afterwards, let $M$ be the current matching, and let $M^{*}$ be an arbitrary maximum matching. The search is successful if $|M^{*}| \le (1+\epsilon)|M|$. Otherwise, there exists an augmenting path for $M$ whose length is bounded from above by $L := 2\lfloor|M|/(|M^{*}| - |M|)\rfloor + 1 \le m:= \lceil\frac{2}{\epsilon} \rceil - 1$. The expected waiting time to flip exactly the edges of an augmenting path of length $l \le L \le m$ is at most
%\begin{equation*}
%\begin{split}
%&\frac{1}{\expect{p^m (1 - p)^{n - \merk{m ?}}}}  = \frac{O(1)C_{n/2}^\beta m^{\beta - 0.5}}{\left(\frac{m}{n}\right)^m\left(1-\frac{m}{n}\right)^{n-m}} \\
%& = O(C_{n/2}^\beta m^{\beta - 0.5 - m}e^mn^m).
%\end{split}
%\end{equation*}
%In order to solve the problem we only need to wait this event to happen $|M^{*}| \le n$ times. This gives us the result.

\subsubsection{Vertex Cover Problem}

Friedrich, Hebbinghaus, Neumann, He and Witt \cite{FriedrichHHNW10} (see also \cite[Chapter 12]{Neumann2012BioinspiredCI}) analyze how evolutionary algorithms compute minimum vertex covers. They observe that already on complete bipartite graphs with partition classes of size $m \le n/2$ and $n-m$, the standard \oea has an expected optimization time of $\Omega(mn^{m-1} + n \log n)$ (to obtain this precise bound, an inspection of their proofs is necessary though). We now show that for $m \le n/3$ the \foea find the global optimum in only $O(C^b_{n/2} n^\beta 2^{m} )$ iterations. For $m \ge n/3$, our general bound of $O(C^\beta_{n/2} n^\beta 2^n)$ gives again a significant improvement over the classic \oea. For reasons of readability, we omit in the following the factor of $C^\beta_{n/2}$. Hence the statements may suppress a dependence of the constants on $\beta$, however, in all cases this would be only the factor $C^\beta_{n/2}$.

% look at the vertex cover problem on a complete bipartite graph $G = (V, E)$, \emph{i.e} such that one can write $ V = V_1 \bigsqcup V_2$, with each edge in $E$ having one end in $V_1$ and the other in $V_2$ ). If $\card{V} = n$ and $\card{V_1} = m < n/2$, they find that the classical \oea gets to the solution in expected time $O(n^{m})$. If we assume $m \le n/3$ then our heavy-tailed operator needs only $O( n^\beta 2^{m - \beta} )$, which is an improvement by a factor of $( n/2 )^{m - \beta}$.

Given a finite graph $G = (V, E)$, the \emph{vertex cover problem} consists of finding a subset $V' \subseteq V$ of minimal size such that each edge of the graph has at least one of its vertices in $V'$. If $\card{V} = n$, say $V = \lbrace v_1, \ldots, v_n \rbrace$, a candidate solution can be represented by a bit-string $x \in \{0, 1\}^{n} $ with $x_i = 1$ if and only if $v_i \in V'$. The fitness function (to be maximized) used in~\cite{FriedrichHHNW10} is $\left(- f(x), - g(x)\right)$, with $f(x)$ being the number of edges not covered by $x$, and $g(x) = \sum_{i = 1}^n x_i$ the number of vertices that $x$ encompasses. Again, fitness values are to be compared with respect to the lexicographical order. This means that solutions covering more edges are always accepted, up to the point where a solution is found that covers all edges; from then on we try to reduce the number of vertices while still covering all edges.

Both the standard \oea and the \foea within the first $O(n \log n)$ iterations construct a solution containing one of $V_1$ and $V_2$, that is, being a vertex cover. In another $O(n \log n)$ iterations, vertices are removed until the solution is one of $V_1$ or $V_2$. Both phases can be analyzed by regarding $1$-bit flips only. With probability $\Theta(m/n)$, this solution is the local optimum $V_2$. In this case, in a single step all $m$ bits representing the vertices in $V_1$ have to be flipped and at least $m$ other bits representing vertices of $V_2$ have to be flipped for the new solution to be accepted. For the \oea, this happens with probability at most $O(n^{-m})$. For the \foea, this happens with probability at least
$$\left( C_{n/2}^\beta \right)^{-1} \left(\frac{n}{2}\right)^{- \beta} \left(\frac{1}{2}\right)^m \left( \sum_{k = m}^{n - m} \binom{n - m}{k} \left(\frac{1}{2}\right)^{n - m} \right),$$
as can be seen from regarding only the iterations using a mutation rate of $1/2$.

We notice that $ \sum_{k = m}^{n - m} \binom{n - m}{k} /2^{n - m} $ is the probability that a binomial variable $Z \sim B(n - m, 1/2)$ takes any value of at least $m$. Since $m \le n/3$, we have $n - m  \ge 2n / 3$, so $(n - m)/2 \ge m$. So this probability is at least $\Pr(Z \ge \expect{Z}) \ge 1/2$. Consequently, the probability to switch a solution containing the other minimal cover, is at least $(C^\beta_{n/2})^{-1} (n/2)^{-\beta} 2^{-m-1}$. From this solution, simple $1$-bit flips in $O(n \log n)$ time lead to the optimal solution. Hence the expected optimization time is dominated by $C^\beta_{n/2} (n/2)^\beta 2^{m+1}$, the expected waiting time for switching into the basin of attraction of the global optimum. This proves our claim (with some room to spare).

\section{Experiments}

We ran an implementation of algorithm \ref{alg:heavy} against the jump function with $n$ varying between 20 and 150. For $m=8$, Figure \ref{graph_jump} shows the average number of iterations (on 1000 runs) of the algorithm before reaching the optimal value of $(1, \ldots, 1) \in \mathbb{R}^n$, with different values of the parameter $\beta$ and along with the ``classical'' \oea. We observe that small values of $\beta$ give better results, although no significant performance increase can be seen below $\beta = 1.5$ (which is why we depicted only cases with $\beta \ge 1.5$). The runtimes for $\beta=1.5$ uniformly are better than the one of the classic \oea by a factor of $2000$. %For $m=5$, not shown here, the improvement would still be by a factor of $5$ uniformly over all problem sizes regarded. 

\begin{figure}[h]
\center
\includegraphics[width=0.9\textwidth]{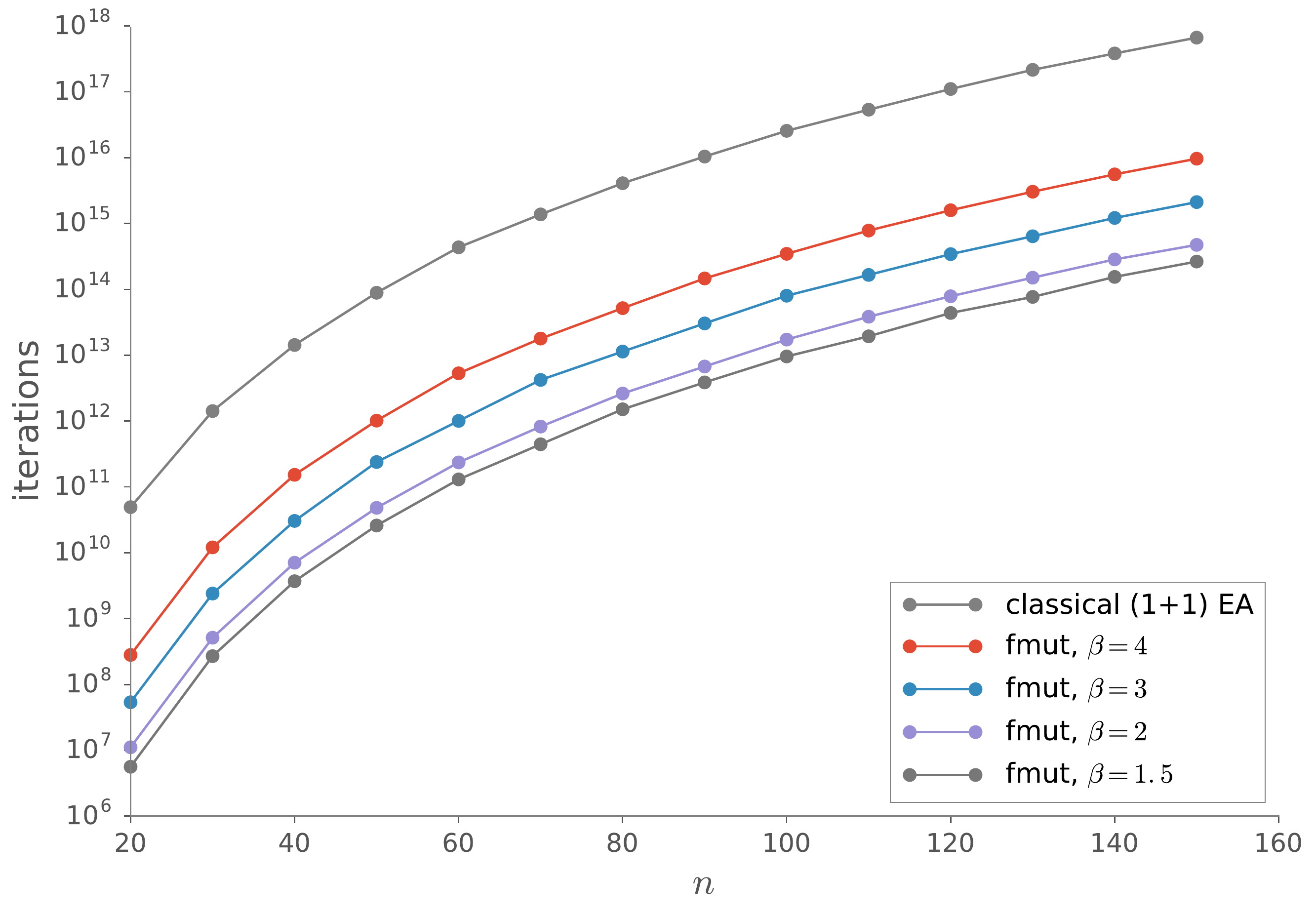}
\caption{Average number (over 1000 runs) of iterations the \oea took with different mutation operators to find the optimum of $\jump_{8,n}$.}
\label{graph_jump}
\end{figure}

\section{Conclusions}

In this work, we took a critical look at the performance of simple mutation-based evolutionary algorithms on multimodal fitness landscapes. Guided by the classic example of jump functions, we observed that mutation rates significantly above the usual recommendation of $1/n$ led to much better results. The proofs of our results suggest that when a multi-bit flip is necessary to leave a local optimum, then so much time is needed to find the right bits to be flipped that it is justified to use a mutation probability high enough that such numbers of bits are sufficiently often touched. The speed-up here greatly outnumbers the slow-down incurred in easy parts of the fitness landscape. 

Since we also observe that the optimal performance can only be obtained for a very small interval of mutation probabilities, we suggest to choose the mutation probability randomly according to a power-law distribution. We prove that this results in a ``one size fits all'' mutation operator, giving a nearly optimal performance on all jump functions. We observe that this mutation operator gives an asymptotically equal or better (including massively better) performance on many problems that were analyzed rigorously before. 

Let us remark that heavy-tailed mutation is not restricted to bit-string representations. For combinatorial problems for which a bit-string representation is inconvenient, e.g., permutations, one way of imitating standard-bit mutation is to sample a number $k$ according to a Poisson distribution with expectation $1$ and then perform $k$ elementary mutation steps, where an elementary mutation step is some simple modification of a search point, e.g., a random swap of two elements in the case of permutations~\cite{ScharnowTW04}. Obviously, to obtain a heavy-tailed mutation operator one just needs to replace the Poisson distribution with a heavy-tailed distribution, e.g., a power-law distribution as used in this work. We are optimistic that such approaches can lead to similar improvements as observed in this work, but we have not regarded this in detail.

Another important step towards understanding heavy-tailed mutation operators would be to gain experience on its performance on real applications. To make it easiest for other researchers to try our new methods, we have put the (simple) code we used in the repository~\cite{github}. %We plan to regularly update it, both with code and experience gained, so that is serves as an easy and up-to-date access point. 
%
%Overall, this work suggests that the previous recommendation to perform mutation via flipping each bit independently are overfitted to the easy unimodal fitness landscapes which were the first to be analyzed rigorously.

\subsection*{Acknowledgements}

This research was supported by Labex DigiCosme (project ANR11LABEX0045DIGICOSME) operated by ANR as part of the program ``Investissement d'Avenir'' Idex ParisSaclay (ANR11IDEX000302) as well as by a public grant as part of the Investissement d'avenir project, reference ANR-11-LABX-0056-LMH, LabEx LMH.

\bibliographystyle{alpha}
\bibliography{heavybib}

}%end sloppy
\end{document}